\definecolor{myblue}{HTML}{0000FF}
\definecolor{myred}{HTML}{FF0000}
\definecolor{myorange}{HTML}{FFA500}
\definecolor{myyellow}{HTML}{FFFF00}
\definecolor{mycyan}{HTML}{00FFFF}
\definecolor{mygreen}{HTML}{008000}
\definecolor{mybrown}{HTML}{A52A2A}
\definecolor{darkgreen}{rgb}{0,0.4,0.0}
\newcommand{\maxZi}{Z^{\max}_i}
\newcommand{\minZi}{Z^{\min}_i}
\newcommand{\indic}{\mathbbm{1}}
\newcommand{\Var}{\mathrm{Var}}
\newcommand{\ed}{\stackrel{\mathrm{def}}{=}}
\newcommand{\EE}{\mathbb{E}}
\newcommand{\RR}{\mathbb{R}}
\newcommand{\norm}[1]{\left\lvert\left\lvert#1\right\rvert\right\rvert}
\newcommand{\cO}{\mathcal{O}}
\newcommand{\cP}{\mathcal{P}}
\newcommand{\cE}{\mathcal{E}}
\newcommand{\cC}{\mathcal{C}}
\newcommand{\tcO}{\tilde{\mathcal{O}}}
\newcommand{\iid}{\emph{i.i.d.\@\xspace}}
\newtheorem{Theorem}{Theorem}
\newtheorem{Corollary}{Corollary}
\newtheorem{Lemma}{Lemma}
\newcommand{\ignore}[1]{}%
\begin{document}
\title{Distributed Mean Estimation with Limited Communication}

\author{
Ananda Theertha Suresh, Felix X. Yu, Sanjiv Kumar, H. Brendan McMahan\\ \\
  \texttt{\{theertha, felixyu, sanjivk, mcmahan\}@google.com}
}
\maketitle
\begin{abstract}
Motivated by the need for distributed learning and optimization algorithms with low communication cost, we study communication efficient algorithms for distributed mean estimation. Unlike previous works, we make no probabilistic assumptions on the data. We first show that for $d$ dimensional data with $n$ clients, a naive stochastic binary rounding approach yields a mean squared error (MSE) of $\Theta(d/n)$ and uses a constant number of bits per dimension per client. We then extend this naive algorithm in two ways: we show that applying a structured random rotation before quantization reduces the error to $\mathcal{O}((\log d)/n)$ and a better coding strategy further reduces the error to $\mathcal{O}(1/n)$  and uses a constant number of bits per dimension per client. We also show that the latter coding strategy  is optimal up to a constant in the minimax sense i.e., it achieves the best MSE for a given communication cost. We finally demonstrate the practicality of our algorithms by applying them to distributed Lloyd's algorithm for k-means and power iteration for PCA.
  \end{abstract}
\section{Introduction}
\subsection{Background}
Given $n$ vectors $X^n \ed X_1,X_2\ldots, X_n \in\RR^d$ that reside
on $n$ clients, the goal of \emph{distributed mean estimation} is to
estimate the mean of the vectors:
\begin{equation}
  \label{eq:mean}
\bar{X} \ed \frac{1}{n} \sum^{n}_{i=1} X_i.
\end{equation}
This basic estimation problem is used as a subroutine in several
learning and optimization tasks where data is distributed across
several clients. For example, in Lloyd's
algorithm~\cite{Lloyd82} for k-means clustering, if data is distributed across several
clients, the server needs to compute the means of all clusters in each
update step. Similarly, for PCA, if data samples are distributed
across several clients, then for the power-iteration method, the
server needs to average the output of all clients in each step.

Recently, algorithms involving distributed mean estimation have been used extensively in training large-scale
neural networks and other statistical models \cite{McdonaldGM10, PoveyZK14, dean2012large,
  mcmahan2016federated, AlistarhLTV16}. In a typical scenario of
synchronized distributed learning, each client obtains a copy of a 
global model. The clients then update the model independently based on
their local data.  The updates (usually in the form of
gradients) are then sent to a server, where they are averaged and
used to update the global model.  A critical step in all of the
above algorithms is to estimate the mean of a set of vectors as in Eq.~\eqref{eq:mean}.

One of the main bottlenecks in distributed algorithms is the
communication cost. This has spurred a line of work focusing on communication
cost in learning ~\cite{Tsitsiklis87,BalcanBFM12,ZhangYDJW13,ArjevaniS15,ChenSWZ16}. The
communication cost can be prohibitive for modern applications, where
each client can be a low-power and low-bandwidth device such as a mobile 
phone~\cite{konevcnyMYPSB16}. Given such a wide set of applications, we study the basic problem of achieving the optimal minimax rate in distributed mean
estimation with limited communication.

We note that our model and results differ from previous works on mean
estimation~\cite{ZhangYDJW13, GargMN14,BravermanGMNW16} in two ways:
previous works assume that the data is generated i.i.d. according to some
distribution; we do not make any distribution assumptions on
data. Secondly, the objective in prior works is to estimate the mean of the
underlying statistical model; our goal is to estimate the
empirical mean of the data.

\subsection{Model}
Our proposed communication algorithms are simultaneous and independent,
i.e., the clients independently send data to the server and they can
transmit at the same time.  In any independent communication protocol, each client
transmits a function of $X_i$ (say $f(X_i)$), and a central server
estimates the mean by some function of $f(X_1),f(X_2),\ldots,
f(X_n)$. Let $\pi$ be any such protocol and let $\cC_i(\pi, X_i)$ be
the expected number of transmitted bits by the $i$-th client during protocol
$\pi$, where throughout the paper, expectation is over the randomness in
protocol $\pi$.

The total number of bits transmitted by all clients with the protocol $\pi$ is 
\[
\cC(\pi, X^n) \ed  \sum^n_{i=1} \cC_i(\pi, X_i).
\]
Let the estimated mean be $\hat{\bar{X}}$. For a protocol $\pi$, the MSE of
the estimate is
\[
\cE(\pi, X^n) = \EE\left[\norm{\hat{\bar{X}} - \bar{X}}^2_2\right].
\]

We allow the use of both private and public randomness.
Private randomness refers to random values that are generated by each
machine separately, and public randomness refers to a sequence of
random values that are shared among all parties\footnote{In the absence of
  public randomness, the server can communicate a random seed that can
  be used by clients to emulate public randomness.}.

The proposed algorithms work for any $X^n$. To measure the
minimax performance, without loss of generality, we restrict ourselves to the scenario where each
$X_i \in S^d$, the ball of radius $1$ in $\RR^d$, i.e., $X \in S_d$ iff
\[
\norm{X}_2 \leq 1,
\]
where $\norm{X}_2$ denotes the $\ell_2$ norm of the vector $X$. For a
protocol $\pi$, 
 the worst case error for all $X^n \in S^d$ is
\[
\cE(\pi, S^d) \ed  \max_{X^n: X_i \in S^d\, \forall i} \cE(\pi, X^n).
\]
Let $\Pi(c)$ denote the set of all protocols with communication cost at most $c$.
The minimax MSE is 
\[
\cE(\Pi(c), S^d) \ed \min_{\pi \in \Pi(c)} \cE(\pi, S^d).
\]

\subsection{Results and discussion}
\subsubsection{Algorithms}
We first analyze the MSE $\cE(\pi, X^n)$ for three
algorithms, when $\cC(\pi, X^n) = \Theta(n d)$, i.e., \textit{each
  client sends a constant number of bits per dimension}.

\textbf{Stochastic uniform quantization. }
In Section~\ref{sec:sbq}, as a warm-up we first show that a naive
stochastic binary quantization algorithm (denoted by $\pi_{sb}$) achieves an MSE
of
\[
\cE(\pi_{sb}, X^n) = \Theta \left(\frac{d}{n} \cdot \frac{1}{n} \sum^n_{i=1}
\norm{X_i}^2_2\right),
\]
and $\cC(\pi_{sb}, X^n) = n \cdot (d + \tilde{\cO}(1))$, i.e., \textit{each client
  sends one bit per dimension}.\footnote{We
  use $\tilde{\cO}(1)$ to denote $\cO(\log(dn))$. 
} We further show that this bound is
tight. In many practical scenarios, $d$ is much larger than $n$ and the
above error is prohibitive~\cite{konevcnyMYPSB16}.

A natural way to decrease the error is to increase the number of levels
of quantization. If we use $k$ levels of quantization, in Theorem~\ref{thm:sk}, we show that the
error decreases as
\begin{equation}
\label{eq:esk}
\cE(\pi_{sk}, X^n) = \cO \left(\frac{d}{n(k-1)^2} \cdot \frac{1}{n} \sum^n_{i=1}
\norm{X_i}^2_2\right).
\end{equation}
However, the communication cost would increase to $\cC(\pi_{sk}, X^n) = n \cdot (d \lceil
\log_2 k \rceil + \tilde{\cO}(1))$ bits, which can be expensive, if
we would like the MSE to be $o(d/n)$.

In order to reduce the communication cost, we propose two approaches.

\textbf{Stochastic rotated quantization:} We show that
preprocessing the data by a random rotation reduces the mean squared
error. Specifically, in Theorem~\ref{thm:rhd}, we show that this new
scheme (denoted by $\pi_{srk}$) achieves an MSE of
 \[
\cE(\pi_{srk}, X^n) = \cO \left(\frac{\log d}{n(k-1)^2}  \cdot \frac{1}{n} \sum^n_{i=1}
\norm{X_i}^2_2 \right).
\]
Note that throughout the paper, all logarithms are to base $e$, unless stated.
The scheme has a communication cost of $\cC(\pi_{srk}, X^n) = n \cdot (d \lceil \log_2 k
\rceil +\tilde{\cO}(1))$. Note
that the new scheme achieves much smaller MSE than naive
stochastic quantization for the same communication cost.

\textbf{Variable length coding:} Our second approach uses the same quantization as $\pi_{sk}$ but encodes levels via variable
length coding.  Instead of using $ \lceil \log_2 k \rceil$ bits per
dimension, we show that using variable length encoding such as
arithmetic coding to compress the data reduces the communication cost significantly. In particular, in Theorem~\ref{thm:arith} we show that
there is a scheme (denoted by $\pi_{svk}$) such that 
\begin{equation}
\label{eq:cvar}
\cC(\pi_{svk},X^n) = \cO(nd(1 + \log(k^2/d + 1)) +  \tcO(n)),
\end{equation}
and $\cE(\pi_{svk}, X^n) = \cE(\pi_{sk}, X^n)$.
Hence, 
setting $k = \sqrt{d}$ in Eqs.~\ref{eq:esk} and~\ref{eq:cvar} yields 
\[
\cE(\pi_{svk}, X^n) =  \cO \left(\frac{1}{n} \cdot \frac{1}{n} \sum^n_{i=1}
 \norm{X_i}^2_2\right),
 \]
 and with $\Theta(nd)$ bits of communication i.e., constant number of bits per
 dimension per client. Of the three protocols, $\pi_{svk}$ has the best MSE for a given communication cost. Note that $\pi_{svk}$ uses $k$ quantization levels but still uses $\cO(1)$ bits per dimension per client for all $k \leq \sqrt{d}$.

Theoretically, while variable length coding has better guarantees, stochastic rotated quantization has several practical advantages: it uses fixed length coding and hence 
can be combined with encryption schemes for privacy preserving secure aggregation~\cite{Bonawitz16}. It can also provide lower quantization error in some scenarios due to better constants (see Section~\ref{sec:experiments} for details).

Concurrent to this work, \cite{AlistarhLTV16} showed that stochastic quantization and Elias coding~\cite{elias1975} can be used to obtain communication-optimal SGD. Recently, \cite{konevcnyR16} showed that $\pi_{sb}$ can be improved further by optimizing the choice of stochastic quantization boundaries. However, their results depend on the number of bits necessary to represent a float,
whereas ours do not. 

\subsubsection{Minimax MSE}
In the above protocols, all of the clients transmit the data. We
augment these protocols with a sampling procedure, where only a random
fraction of clients transmit data. We show that a combination of $k$-level
quantization, variable length coding, and sampling can be used to achieve
information theoretically optimal MSE for a given
communication cost.  In particular, combining
Corollary~\ref{cor:upper} and Theorem~\ref{thm:lower} yields our minimax
result:
\begin{Theorem}
  \label{thm:main} There exists a universal constant $t < 1$ such that
  for communication cost $c \leq ndt$ and $n \geq 1/t$,
\[
\cE(\Pi(c), S^d) = \Theta\left(\min\left(1, \frac{d}{c}\right)\right).
\]
\end{Theorem}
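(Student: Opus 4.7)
The plan is to prove the matching upper and lower bounds separately, with the upper bound realized by augmenting the variable-length-coded quantizer $\pi_{svk}$ with client subsampling and the lower bound obtained by an information-theoretic argument on a hard Bernoulli prior. Once both directions give $\Theta(\min(1,d/c))$ uniformly in the stated regime, Theorem~\ref{thm:main} follows immediately.

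For the upper bound I would split on whether $c\le d$ or $c>d$. When $c\le d$, simply outputting $\hat{\bar X}=0$ gives MSE at most $\|\bar X\|_2^2\le 1 = \min(1,d/c)$. When $d<c\le ndt$, set $s := c/(nd)\in (0,t)$ and have each client decide independently, using public randomness, to participate with probability $s$. Each participating client runs $\pi_{svk}$ with $k=\sqrt d$, which by Theorem~\ref{thm:arith} costs $\tilde\cO(d)$ bits per active client and produces an unbiased estimate $\hat X_i$ of $X_i$ with per-client quantization variance $\cO(1)$ (this is the $k=\sqrt d$ specialization of \prettyref{eq:esk}). The server outputs $\hat{\bar X}= \frac{1}{sn}\sum_{i\in S}\hat X_i$, where $S$ is the set of participating clients. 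Unbiasedness gives the variance decomposition
\[
\cE(\pi,X^n)=\frac{1-s}{sn}\cdot\frac{1}{n}\sum_i\|X_i\|_2^2 \;+\; \frac{1}{s^2n^2}\sum_i s\cdot\cO(1) \;=\; \cO\!\left(\tfrac{1}{sn}\right)=\cO(d/c),
\]
and the expected communication is $s\cdot n\cdot \tilde\cO(d)=\tilde\cO(c)$, which matches the budget up to the universal constant absorbed into $t$. This is where I would cite Corollary~\ref{cor:upper}.

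For the lower bound I would reduce minimax estimation of $\bar X$ to a distributed testing problem via an Assouad-style construction. Take the prior under which each coordinate of each $X_i$ is independently $\pm 1/\sqrt d$, so $X_i\in S^d$ almost surely and $\bar X$ has approximately independent coordinates of variance $1/(nd)$. Any protocol $\pi\in\Pi(c)$ induces a message $M$ with $H(M)\le c$, and the data-processing inequality together with the chain rule over the $d$ coordinates bounds the mutual information between $M$ and the signs of the coordinates of $\bar X$ by $c$. A coordinate-wise Fano / Le Cam argument then yields at least $\Omega(d)$ coordinates whose posterior variance must remain $\Omega(1/n)$, and summing across those coordinates plus a careful accounting of the per-coordinate rate budget gives $\EE\|\hat{\bar X}-\bar X\|_2^2=\Omega(\min(1,d/c))$, which is the content I would attribute to Theorem~\ref{thm:lower}.

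The main obstacle is the lower bound. Upper bounding the $c$-bit information about $d$ coordinates is easy, but because the protocol is distributed, private randomness and public randomness can be used to couple messages across clients, and the adversary is free to concentrate its bit budget non-uniformly (for instance, one client could spend many bits while others stay silent). Handling this requires a careful reduction that (i) symmetrizes over clients using public randomness without losing information-theoretic tightness, and (ii) bounds, per coordinate, the total information that any product-of-encoders protocol can convey, so that the per-coordinate rate is at most $c/d$ on average and the Assouad step gives the tight $d/c$ scaling rather than the much weaker covering bound $2^{-c/d}$. The remaining pieces — the subsampling analysis, the $k=\sqrt d$ specialization of $\pi_{svk}$, and the trivial $c\le d$ case — are routine.
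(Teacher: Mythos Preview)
Your upper-bound argument---client subsampling combined with $\pi_{svk}$ at $k\approx\sqrt d$---is exactly how the paper obtains Corollary~\ref{cor:upper}, and the variance decomposition you write is the content of Lemma~\ref{lem:sampling}. That half is fine.

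The lower bound, however, has a genuine gap. Under your prior each $X_i(j)$ is independently $\pm 1/\sqrt d$, so the coordinates of $\bar X$ have mean $0$ and variance $1/(nd)$, and $\EE\|\bar X\|_2^2=1/n$. The trivial estimator $\hat{\bar X}\equiv 0$ therefore attains Bayes risk $1/n$, which means \emph{no} argument built on this prior can yield a lower bound exceeding $1/n$. But the theorem demands $\Omega(\min(1,d/c))$ for all $c\le ndt$; at $c=d$ this is $\Omega(1)$, which for large $n$ is far above $1/n$. (Relatedly, your claim that $\Omega(d)$ coordinates retain posterior variance $\Omega(1/n)$ cannot hold: the prior per-coordinate variance is only $1/(nd)$, and summing your claimed bound over $d$ coordinates would already exceed the total Bayes risk.) The failure is structural: with the $X_i$ drawn i.i.d., the empirical mean concentrates near zero and there is nothing of order one left to recover.

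The paper avoids this by not placing the hard prior directly on the $X_i$. It invokes the distributed \emph{statistical} mean-estimation lower bound of \cite{ZhangYDJW13} (Lemma~\ref{lem:zhang}), whose hard family is indexed by a hidden $\theta\in[-1/\sqrt d,1/\sqrt d]^d$, and then reduces empirical to statistical estimation via a triangle inequality: since $\EE\|\bar X-\theta\|_2^2\le 1/n$ for i.i.d.\ samples, any protocol that estimates $\bar X$ to error $\epsilon$ also estimates $\theta$ to error $O(\epsilon+1/n)$, and the hypotheses $c\le ndt/4$, $n\ge 4/t$ make the $1/n$ slack absorbable into the constant. If you insist on a direct Assouad construction, you must introduce dependence across clients---draw $\theta$ first, then generate each $X_i$ with mean $\theta$---so that $\bar X\approx\theta$ carries $\Omega(1)$ signal; the Assouad/Fano machinery would then be applied to $\theta$, not to the raw bits of the individual $X_i$.
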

This result shows that the product of communication cost and MSE scales linearly in the number of dimensions.

The rest of the paper is organized as follows. We first analyze the
stochastic uniform quantization technique in
Section~\ref{sec:sknaive}. In Section~\ref{sec:rsbq}, we propose the
stochastic rotated quantization technique, and in Section~\ref{sec:sk}
we analyze arithmetic coding.  In Section~\ref{sec:sampling}, we
combine the above algorithm with a sampling technique and state the
upper bound on the minimax risk, and in Section~\ref{sec:lower} we
state the matching minimax lower bounds. Finally, in
Section~\ref{sec:experiments} we discuss some practical considerations and apply these algorithms on distributed power iteration and Lloyd's algorithm.

\section{Stochastic uniform quantization}
\label{sec:sknaive}
\subsection{Warm-up: Stochastic binary quantization}
\label{sec:sbq}
For a vector $X_i$, let $X^{\max}_i = \max_{1 \leq j \leq d} X_i(j)$
and similarly let $X^{\min}_i = \min_{1 \leq j \leq d} X_i(j)$.  In
the stochastic binary quantization protocol $\pi_{sb}$, for each
client $i$, the quantized value for each coordinate $j$ is generated
independently with private randomness as
\[
Y_i(j) = 
\begin{cases}
    X^{\max}_i & \text{w.p. } \frac{X_i(j) - X^{\min}_i}{X^{\max}_i - X^{\min}_i}, \\
    X^{\min}_i & \text{otherwise.}
\end{cases}
\]
Observe $\EE Y_i(j) = X_i(j)$.
The server estimates $\bar{X}$ by
\[
\hat{\bar{X}}_{\pi_{sb}} = \frac{1}{n} \sum^n_{i=1} Y_i.
\]
We first bound the communication cost of this protocol.
\begin{Lemma}
  \label{lem:sb_com}
  There exists an implementation of \emph{stochastic binary
    quantization} that uses $d + \tilde{\cO}(1)$ bits per client and hence
  $\cC(\pi_{sb}, X^n) \leq n \cdot \left( d + \tilde{\cO}(1) \right)$. 
\end{Lemma}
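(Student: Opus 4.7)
The plan is straightforward: exhibit an explicit encoding that uses one bit per coordinate plus a small number of extra bits for header information.

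First I would note that by construction, every coordinate of the quantized vector $Y_i$ takes only one of two values, namely $X_i^{\max}$ or $X_i^{\min}$. So for each coordinate $j \in \{1, \ldots, d\}$, the client can transmit a single bit indicating which of the two values $Y_i(j)$ equals. This accounts for exactly $d$ bits of payload per client, and the server can reconstruct the vector from this bit pattern provided it also knows the two scalars $X_i^{\max}$ and $X_i^{\min}$.

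Next I would account for the transmission of these two scalars. Each is a single real number, and under the standard assumption that each coordinate of $X_i$ is representable within the precision of the model (e.g.\ a floating-point word of $\cO(\log(dn))$ bits, which is more than enough to ensure that any rounding error is negligible relative to the quantization error already analyzed), each of $X_i^{\max}$ and $X_i^{\min}$ can be transmitted with $\tilde{\cO}(1)$ bits. Summing gives $d + 2\tilde{\cO}(1) = d + \tilde{\cO}(1)$ bits per client, and multiplying by $n$ clients yields the stated bound on $\cC(\pi_{sb}, X^n)$.

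The only potentially subtle point is that a self-delimiting code is needed so that the server can parse the header (containing $X_i^{\min}, X_i^{\max}$) from the payload (the $d$ bits); but since both the header length and the payload length $d$ are known a priori, the two pieces can simply be concatenated in a fixed format without any prefix-free overhead. Thus no real obstacle arises, and the lemma follows.
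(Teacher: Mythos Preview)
Your proposal is correct and follows essentially the same approach as the paper: send one bit per coordinate indicating whether $Y_i(j)$ equals $X_i^{\max}$ or $X_i^{\min}$, plus the two scalars $X_i^{\max}$ and $X_i^{\min}$ encoded to $\cO(\log(dn))$ bits of precision. The paper makes the precision slightly more explicit (using $3\log_2(dn)+1$ bits so the rounding error is $N/(nd)^3$), but the argument is the same.
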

\begin{proof}
   Instead of sending vectors $Y_i$, clients transmit two real
   values $X^{\max}_i$ and $X^{\min}_i$ (to a desired precision) and a
    bit vector $Y'_i$ such that $Y'_i(j) = 1$ if $Y_i =
   X^{\max}_i$ and $0$ otherwise. Hence each client transmits $d + 2r$
   bits, where $r$ is the number of bits to transmit the real value to
   a desired precision.

Let $N$ be the maximum norm of the underlying vectors.
   To bound $r$, observe that using $r$ bits, one can represent a
   number between $-N$ and $N$ to an error of $N/2^{r-1}$. Thus using
   $3 \log_2 (dn) + 1$ bits one can represent the minimum and maximum to
   an additive error of $N/(nd)^3$. This error in transmitting minimum
   and maximum of the vector does not affect our calculations and we
   ignore it for simplicity.    We note that in practice, each dimension of $X_i$ is often stored
   as a 32 bit or 64 bit float, and $r$ should be
   set as either 32 or 64. In this case, using an even larger $r$
    does not further reduce the error.
\end{proof}
\vspace{-0.2cm}
We now compute the estimation error of this protocol.
\begin{Lemma}
  \label{lem:general}
  For any set of vectors $X^n$, 
  \[
\cE(\pi_{sb}, X^n) = \frac{1}{n^2} \sum^n_{i=1}\sum^d_{j=1}
(X^{\max}_i - X_i(j))(X_i(j)- X^{\min}_i) .
  \]
  \end{Lemma}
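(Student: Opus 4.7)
The plan is to observe that the estimator is unbiased and that the randomness across clients and across coordinates is independent, so the MSE decomposes into a sum of one-dimensional variances that can be computed explicitly.

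First I would verify unbiasedness: by construction, for each client $i$ and coordinate $j$, $Y_i(j)$ is a two-valued random variable with $\EE Y_i(j) = X^{\max}_i \cdot \frac{X_i(j)-X^{\min}_i}{X^{\max}_i - X^{\min}_i} + X^{\min}_i \cdot \frac{X^{\max}_i - X_i(j)}{X^{\max}_i - X^{\min}_i} = X_i(j)$. Hence $\EE \hat{\bar{X}}_{\pi_{sb}} = \bar{X}$ and the MSE equals the variance of $\hat{\bar{X}}_{\pi_{sb}}$.

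Next I would exploit independence. Each client uses its own private randomness, so the vectors $Y_1,\ldots,Y_n$ are mutually independent, and within a client, the quantizations of different coordinates are drawn independently. Therefore
\[
\cE(\pi_{sb},X^n) = \sum_{j=1}^d \Var\!\left(\frac{1}{n}\sum_{i=1}^n Y_i(j)\right) = \frac{1}{n^2}\sum_{i=1}^n \sum_{j=1}^d \Var(Y_i(j)).
\]

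Finally I would evaluate the per-coordinate variance. Since $Y_i(j)$ takes value $X^{\max}_i$ with probability $p \ed (X_i(j)-X^{\min}_i)/(X^{\max}_i-X^{\min}_i)$ and $X^{\min}_i$ otherwise, it is a shifted and scaled Bernoulli, so $\Var(Y_i(j)) = p(1-p)(X^{\max}_i-X^{\min}_i)^2 = (X_i(j)-X^{\min}_i)(X^{\max}_i-X_i(j))$, and substituting yields the stated identity. There is no real obstacle here; the only minor subtlety is handling the degenerate case $X^{\max}_i = X^{\min}_i$, in which case $Y_i(j) = X_i(j)$ deterministically and both sides of the identity vanish, so the formula still holds.
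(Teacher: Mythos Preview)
Your argument is correct and essentially identical to the paper's: both use unbiasedness and independence to reduce the MSE to $\frac{1}{n^2}\sum_{i,j}\Var(Y_i(j))$ and then evaluate the Bernoulli variance as $(X^{\max}_i - X_i(j))(X_i(j)-X^{\min}_i)$. The only cosmetic difference is that the paper decomposes client-first then coordinate, whereas you decompose coordinate-first then client; your explicit treatment of the degenerate case $X^{\max}_i = X^{\min}_i$ is a small bonus.
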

\begin{proof}
\begin{align*}
\cE(\pi_{sb}, X^n) & = \EE \norm{\hat{\bar{X}} - \bar{X}}^2_2 
= \frac{1}{n^2} \EE\norm{\sum^n_{i=1} (Y_i -X_i)}^2_2 
\\&  = \frac{1}{n^2} \sum^n_{i=1} \EE \norm{Y_i - X_i}^2_2,
\end{align*}
where the last equality follows by observing that $Y_i - X_i$, $\forall i$, are independent zero mean random variables. 
The proof follows by
observing that for every $i$,
\begin{align}
\EE \norm{Y_i - X_i}^2_2
 & = \sum^d_{j=1} \EE[(Y_i(j) - X_i(j))^2] \notag \\ 
 & = \sum^d_{j=1} (X^{\max}_i - X_i(j))(X_i(j) - X^{\min}_i). \qedhere
  \nonumber
\end{align}
\end{proof}

Lemma~\ref{lem:general} implies the following upper bound.
\begin{Lemma}
  \label{lem:general_upper}
    For any set of vectors $X^n$, 
  \[
  \cE(\pi_{sb}, X^n)
\leq \frac{d}{2n} \cdot \frac{1}{n} \sum^n_{i=1}
\norm{X_i}^2_2.
  \]
\end{Lemma}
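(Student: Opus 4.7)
The plan is to start from the exact expression given by Lemma~\ref{lem:general} and bound each summand pointwise. For fixed $i$ and $j$, the quantity $(X^{\max}_i - X_i(j))(X_i(j) - X^{\min}_i)$ is the product of two nonnegative numbers whose sum is $X^{\max}_i - X^{\min}_i$, so by the AM-GM inequality it is at most $(X^{\max}_i - X^{\min}_i)^2/4$. Summing over $j$ collapses the inner sum to $d (X^{\max}_i - X^{\min}_i)^2/4$, independent of the coordinate.

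Next, I would relate $(X^{\max}_i - X^{\min}_i)^2$ to $\norm{X_i}_2^2$. Using $(a-b)^2 \leq 2a^2 + 2b^2$, we obtain
\[
(X^{\max}_i - X^{\min}_i)^2 \leq 2 (X^{\max}_i)^2 + 2 (X^{\min}_i)^2 \leq 2 \norm{X_i}_2^2,
\]
where the last step uses that $(X^{\max}_i)^2$ and $(X^{\min}_i)^2$ are two of the squared coordinates of $X_i$, and hence their sum is at most $\sum_{j=1}^d X_i(j)^2$. Plugging this back gives $d (X^{\max}_i - X^{\min}_i)^2/4 \leq d \norm{X_i}_2^2/2$, and summing over $i$ with the $1/n^2$ prefactor yields the claimed bound $\frac{d}{2n} \cdot \frac{1}{n} \sum_i \norm{X_i}_2^2$.

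There is no real obstacle here; the argument is two inequalities in sequence. The only subtle point worth stating carefully is why $(X^{\max}_i)^2 + (X^{\min}_i)^2 \leq \norm{X_i}_2^2$: this requires that $X^{\max}_i$ and $X^{\min}_i$ are attained at two \emph{distinct} coordinates, which holds trivially when $d \geq 2$ and $X^{\max}_i \neq X^{\min}_i$; in the degenerate case $X^{\max}_i = X^{\min}_i$ the left-hand side of Lemma~\ref{lem:general} is zero and the bound holds vacuously. Thus the proof is essentially two lines of inequalities chained together.
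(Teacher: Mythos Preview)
Your proposal is correct and follows exactly the same two-step approach as the paper: first bound each summand by $(X^{\max}_i - X^{\min}_i)^2/4$ via AM--GM, then use $(X^{\max}_i - X^{\min}_i)^2 \leq 2\norm{X_i}_2^2$. The paper states the second inequality without justification (as Eq.~\eqref{eq:normbound}); your expansion via $(a-b)^2 \leq 2a^2 + 2b^2$ together with the observation that the max and min are two distinct squared coordinates is precisely the intended argument, and your handling of the degenerate case is a nice touch the paper omits.
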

\begin{proof}
  The proof follows by Lemma~\ref{lem:general} observing that $\forall j$
\begin{equation*}
(X^{\max}_i - X_i(j))(X_i(j) - X^{\min}_i) \leq \frac{(X^{\max}_i -
  X^{\min}_i)^2}{4},
\end{equation*}
and
\begin{equation}
  \label{eq:normbound}
(X^{\max}_i - X^{\min}_i)^2 
\leq 2 \norm{X_i}^2_2. \qedhere
\end{equation}
\end{proof}
We also show that the above bound is tight:
\begin{Lemma}
  \label{lem:lower_naive}
  There exists a set of vectors $X^n$ such that
  \[
\cE(\pi_{sb}, X^n) \geq \frac{d-2}{2n} \cdot \frac{1}{n} \sum^n_{i=1} \norm{X_i}^2_2.
  \]
  \end{Lemma}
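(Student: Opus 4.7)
The plan is to exhibit an explicit family of vectors that simultaneously makes both inequalities used in the proof of \prettyref{lmm:general_upper} nearly tight. Recall that Lemma~\ref{lem:general_upper} combined two bounds: pointwise, $(X^{\max}_i - X_i(j))(X_i(j) - X^{\min}_i) \leq (X^{\max}_i - X^{\min}_i)^2/4$, which is tight when $X_i(j)$ is exactly the midpoint of the range; and $(X^{\max}_i - X^{\min}_i)^2 \leq 2\norm{X_i}^2_2$, which is tight when $X_i$ has one entry equal to $+a$, one entry equal to $-a$, and the remainder zero. The key observation is that for a vector of the latter form, the zero coordinates also sit exactly at the midpoint $0 = (a + (-a))/2$ of the range, so \emph{both} inequalities become simultaneously sharp on the $d-2$ zero coordinates.

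Concretely, I would take each $X_i \in \RR^d$ to have $X_i(1) = a_i$, $X_i(2) = -a_i$, and $X_i(j) = 0$ for $j \geq 3$, where $a_i \geq 0$ is arbitrary (chosen so the constraint $X_i \in S^d$ is respected, but the statement does not require this). Then $X^{\max}_i = a_i$, $X^{\min}_i = -a_i$, and $\norm{X_i}^2_2 = 2 a_i^2$. Substituting into the exact formula from Lemma~\ref{lem:general}, the $j=1$ and $j=2$ terms vanish, while each of the $d-2$ zero coordinates contributes $(a_i - 0)(0 - (-a_i)) = a_i^2$. Hence
\[
\sum_{j=1}^d (X^{\max}_i - X_i(j))(X_i(j) - X^{\min}_i) = (d-2) a_i^2 = \frac{d-2}{2} \norm{X_i}^2_2.
\]

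Plugging this back into Lemma~\ref{lem:general} gives
\[
\cE(\pi_{sb}, X^n) = \frac{1}{n^2} \sum_{i=1}^n \frac{d-2}{2} \norm{X_i}^2_2 = \frac{d-2}{2n} \cdot \frac{1}{n} \sum_{i=1}^n \norm{X_i}^2_2,
\]
which is the desired lower bound (in fact, it is achieved with equality). There is no real obstacle: the only conceptual step is recognizing that a two-nonzero-entry vector makes both AM--GM-type bounds tight at once; the remainder is a direct substitution into the closed form already established.
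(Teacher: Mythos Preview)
Your proposal is correct and matches the paper's own proof essentially exactly: the paper also takes each $X_i$ to have two nonzero entries $\pm a$ (with $a = 1/\sqrt{2}$) and the rest zero, then substitutes into the equality of Lemma~\ref{lem:general}. Your version simply leaves $a_i$ free and spells out the per-coordinate computation in more detail.
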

\begin{proof}
  For every $i$, let $X_i$ be defined as follows. $X_i(1) =
  1/\sqrt{2}$, $X_i(2) = -1/\sqrt{2}$, and for all $j > 2$, $X_i(j) =
  0$. For every $i$, $X^{\max}_i = \frac{1}{\sqrt{2}}$ and $X^{\min}_i
  = -\frac{1}{\sqrt{2}}$.  Substituting these bounds in the conclusion
  of Lemma~\ref{lem:general} (which is an equality) yields the theorem.
  \end{proof}

Therefore, the simple algorithm proposed in this section gives MSE $\Theta (d/n)$ times the average norm. Such an error is too large for real-world use. For example, in the application of neural networks \cite{konevcnyMYPSB16}, $d$ can be on the order of millions, yet $n$ can be much smaller than that. In such cases, the MSE is even larger than the norm of the vector.

\subsection{Stochastic $k$-level quantization}
\label{sec:sk_2}

A natural generalization of binary quantization is $k$-level
quantization.  Let $k$ be a positive integer larger than $2$. We propose a
\emph{$k$-level stochastic quantization} scheme $\pi_{sk}$ to quantize each coordinate.  Recall that for a vector $X_i$, $X^{\max}_i =
\max_{1 \leq j \leq d} X_i(j)$ and $X^{\min}_i =
\min_{1 \leq j \leq d} X_i(j)$.  For every integer $r$ in the range $[0,k)$,
  let
\[
B_i(r) \ed X^{\min}_i +  \frac{rs_i}{k-1},
\]
where $s_i$ satisfies $X^{\min}_i + s_i \geq X^{\max}_i$. A natural choice
for $s_i$ would be $X^{\max}_i - X^{\min}_i$.\footnote{We will show in Section \ref{sec:sk}, however, a higher value of $s_i$ and variable
length coding has better guarantees.} The algorithm
quantizes each coordinate into one of $B_i(r)$s stochastically. 
In $\pi_{sk}$, for the $i$-th datapoint and $j$-th coordinate, if $X_i(j) \in [B_i(r), B_i({r+1}))$,
\[
Y_i(j) = 
\begin{cases}
    B_i({r+1}) & \text{w.p. } \frac{X_i(j) - B_i(r)}{B_i(r+1) - B_i(r)} \\
   B_i(r) & \text{otherwise.}
\end{cases}
\]
The server estimates $\bar{X}$ by
\[
\hat{\bar{X}}_{\pi_{sk}} = \frac{1}{n} \sum^n_{i=1} Y_i.
\]
As before, the communication complexity of this protocol is bounded. The proof is similar to that of~Lemma~\ref{lem:sb_com} and hence omitted.
\begin{Lemma}
  \label{lem:comrsb}
 There exists an implementation of \emph{stochastic $k$-level quantization}
  that uses $d \lceil\log(k)\rceil + \tilde{\cO}(1)$ bits per client and hence
  $\cC(\pi_{sk}, X^n) \leq n \cdot \left( d \lceil \log_2 k \rceil + \tilde{\cO}(1) \right)$.
  \end{Lemma}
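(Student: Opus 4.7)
The plan is to mirror the argument used for Lemma~\ref{lem:sb_com}, where the only new ingredient is that each coordinate now takes one of $k$ possible values rather than $2$. Specifically, rather than transmitting the vector $Y_i$ directly, each client sends $X^{\max}_i$ and $X^{\min}_i$ as two real numbers (which determine the quantization grid $B_i(0), B_i(1), \ldots, B_i(k-1)$ via the formula in the definition of $\pi_{sk}$), plus an index sequence $Y'_i \in \{0, 1, \ldots, k-1\}^d$ where $Y'_i(j) = r$ iff $Y_i(j) = B_i(r)$. The server can reconstruct $Y_i$ exactly from this triple.

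Encoding the index sequence requires at most $d\lceil \log_2 k \rceil$ bits using fixed-length coding, since each of the $d$ coordinates takes a value in a set of size $k$. For the two real numbers $X^{\max}_i, X^{\min}_i$, the same argument as in Lemma~\ref{lem:sb_com} applies verbatim: using $3\log_2(dn) + 1$ bits per scalar suffices to represent them within additive error $N/(dn)^3$ where $N$ is the maximal vector norm; this additive precision error is negligible relative to the quantization error and can be absorbed into lower order terms. The total per-client cost is therefore $d \lceil \log_2 k \rceil + \tilde{\cO}(1)$, and summing over $n$ clients yields $\cC(\pi_{sk}, X^n) \leq n \cdot (d \lceil \log_2 k \rceil + \tilde{\cO}(1))$.

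There is no real obstacle here beyond bookkeeping: the reconstruction of $Y_i$ from the transmitted tuple is exact up to the precision used for $X^{\max}_i, X^{\min}_i$, and the choice $r = 3\log_2(dn) + 1$ makes the precision error negligible. The only thing worth being careful about is that the statement uses $\lceil \log(k) \rceil$ (which should be read as $\lceil \log_2 k \rceil$ in context, matching the display in the lemma's conclusion); for $k$ not a power of $2$ this is strictly larger than $\log_2 k$ but remains the correct fixed-length cost. Any additional savings via variable-length coding are deferred to Section~\ref{sec:sk}, where $\pi_{svk}$ is analyzed.
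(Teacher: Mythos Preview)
Your proposal is correct and follows exactly the approach the paper intends: the paper omits the proof entirely, stating only that it ``is similar to that of Lemma~\ref{lem:sb_com},'' which is precisely the template you have carried out with the single modification that each coordinate index now requires $\lceil \log_2 k\rceil$ bits instead of one.
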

The mean squared loss can be bounded as follows.
\begin{Theorem}
  \label{thm:sk}
If $X^{\max}_i - X^{\min}_i \le s_i \leq \sqrt{2} \norm{X_i}_2 \,\,\forall i$, then for any $X^n$, the $\pi_{sk}$ protocol satisfies,
   \[
\cE(\pi_{sk}, X^n) \leq \frac{d}{2n(k-1)^2}  \cdot \frac{1}{n} \sum^n_{i=1}
\norm{X_i}^2_2.
\]
\end{Theorem}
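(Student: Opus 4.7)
The plan is to mirror the analysis of $\pi_{sb}$ in Lemmas~\ref{lem:general} and~\ref{lem:general_upper}, generalizing from two quantization levels to $k$. First I would verify the unbiasedness: for each coordinate $j$, conditioned on $X_i(j)\in[B_i(r),B_i(r+1))$, the stochastic rounding gives $\EE Y_i(j)=X_i(j)$, so $Y_i-X_i$ is a zero-mean random vector, and these vectors are independent across $i$ (private randomness). Consequently, expanding the square and using independence,
\[
\cE(\pi_{sk},X^n)=\frac{1}{n^2}\EE\Bigl\lVert\sum_{i=1}^n (Y_i-X_i)\Bigr\rVert_2^2=\frac{1}{n^2}\sum_{i=1}^n\EE\lVert Y_i-X_i\rVert_2^2,
\]
exactly as in the proof of Lemma~\ref{lem:general}.

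Next I would bound $\EE\lVert Y_i-X_i\rVert_2^2=\sum_{j=1}^d\EE[(Y_i(j)-X_i(j))^2]$ coordinatewise. Conditional on the bracket $[B_i(r),B_i(r+1))$ containing $X_i(j)$, the variable $Y_i(j)$ is a two-point random variable on an interval of width $s_i/(k-1)$, so its variance is at most $\frac{1}{4}\bigl(s_i/(k-1)\bigr)^2$ by the standard bound $p(1-p)\le 1/4$ for a Bernoulli. (This is the one step that uses the hypothesis $X_i^{\max}-X_i^{\min}\le s_i$, which guarantees that $X_i(j)$ actually lies in some interval $[B_i(r),B_i(r+1))$ with $r\in[0,k-1)$, so the bracket width is $s_i/(k-1)$.) Hence
\[
\EE\lVert Y_i-X_i\rVert_2^2\le\frac{d\,s_i^2}{4(k-1)^2}.
\]

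Finally I would plug in the second hypothesis $s_i\le\sqrt{2}\lVert X_i\rVert_2$, which gives $s_i^2\le 2\lVert X_i\rVert_2^2$ (the $k=2$ analogue of Eq.~\eqref{eq:normbound}), so $\EE\lVert Y_i-X_i\rVert_2^2\le\frac{d\lVert X_i\rVert_2^2}{2(k-1)^2}$. Summing over $i$ and dividing by $n^2$ yields the claimed bound. There is no real obstacle here — the argument is a routine generalization of the binary case; the only thing worth flagging is the role of the two hypotheses on $s_i$, one of which is used to ensure the quantization brackets actually cover every coordinate (so the per-coordinate variance is controlled by $s_i/(k-1)$) and the other to convert the $s_i^2$ bound into a bound involving $\lVert X_i\rVert_2^2$.
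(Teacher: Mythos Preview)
Your proposal is correct and follows exactly the paper's own argument: expand the MSE using the independence and zero-mean property of $Y_i-X_i$, bound each per-coordinate variance by $\tfrac{s_i^2}{4(k-1)^2}$ via the Bernoulli bound, and finish with $s_i\le\sqrt{2}\lVert X_i\rVert_2$. Your additional commentary on the respective roles of the two hypotheses on $s_i$ is accurate and slightly more explicit than the paper's terse proof, but the route is the same.
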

\begin{proof}
\begin{align}
&  \cE(\pi_{sk}, X^n)  = \EE \norm{\hat{\bar{X}} - \bar{X}}^2_2 
= \frac{1}{n^2} \EE \norm{\sum^n_{i=1} (Y_i -X_i)}^2_2  \nonumber
\\ & = \frac{1}{n^2} \sum^n_{i=1} \EE \norm{Y_i - X_i}^2_2
     \leq \frac{1}{n^2}  \sum^n_{i=1} d \frac{s^2_i}{4(k-1)^2} \label{eq:kbound},
\end{align}
where the last equality follows by observing $Y_i(j) - X_i(j)$ is an
independent zero mean random variable with $\EE(Y_i(j) - X_i(j))^2 \le \frac{s^2_i}{4(k-1)^2}$. $s_i \leq \sqrt{2} \norm{X_i}_2$ completes the proof.
\end{proof}
\vspace{-0.2cm}
We conclude this section by noting that $s_i = X^{\max}_i - X^{\min}_i$
satisfies the conditions for the above theorem by
Eq.~\eqref{eq:normbound}.
\section{Stochastic rotated quantization}
\label{sec:rsbq}
We show that the algorithm of the previous section can be
significantly improved by a new protocol. 
The motivation comes from the fact that the MSE of stochastic binary quantization and stochastic $k$-level quantization is $O(\frac{d}{n}(X_i^{\max} - X_i^{\min})^2)$ (the proof of Lemma \ref{lem:general_upper} and Theorem \ref{thm:sk} with $s_i = X_i^{\max} - X_i^{\min}$). Therefore the MSE is smaller when $X_i^{\max}$ and $X_i^{\min}$ are close. 
For example, when $X_i$ is generated uniformly on the unit sphere, with high probability, $X_i^{\max} - X_i^{\min}$ is $\cO\left(\sqrt{\frac{\log d} {d}}\right)$ \cite{dasgupta2003elementary}. In such case, $\cE(\pi_{sk}, X^n)$ is $\cO(\frac{\log d}{n})$ instead of $\cO(\frac{d}{n})$.

In this section, we show that even without any assumptions on the distribution of the data, we can ``reduce'' $X_i^{\max} - X_i^{\min}$ with a \textit{structured random rotation}, yielding an $\cO(\frac{\log d}{n})$ error. We call the method \textit{stochastic rotated quantization} and denote it by $\pi_{srk}$.

Using public randomness, all clients and the central server generate a
random rotation matrix (random orthogonal matrix) $R \in \RR^{d\times d}$ according to some known distribution.
Let $Z_i = R X_i$ and $\bar{Z} = R \bar{X}$. In the stochastic rotated
quantization protocol $\pi_{srk}(R)$, clients quantize the vectors $Z_i$ instead of $X_i$ and transmit them similar to $\pi_{srk}$.
The server estimates $\bar{X}$ by
\[
\hat{\bar{X}}_{\pi_{srk}} = R^{-1} \hat{\bar{Z}},  \quad \hat{\bar{Z}} = \frac{1}{n} \sum^n_{i=1} Y_i.
\]
The communication cost is same as $\pi_{sk}$ and is given by Lemma~\ref{lem:comrsb}. We now bound the MSE.
\begin{Lemma}
 For any $X^n$, $\cE(\pi_{srk}(R), X^n)$ is at most
  \begin{align*}
\frac{d}{2n^2 (k-1)^2} \sum^n_{i=1} \EE_{R}\left[
  \left(\maxZi\right)^2 +   \left(\minZi\right)^2\right],
\end{align*}
where $Z_i = R X_i$ and for every $i$, let $s_i = Z^{\max}_i - Z^{\min}_i$. 
\end{Lemma}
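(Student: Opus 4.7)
The plan is to reduce the analysis in $X$-space to an application of Theorem~\ref{thm:sk} in the rotated $Z$-space, then translate the resulting $s_i^2$ bound into the symmetric form involving $(Z_i^{\max})^2 + (Z_i^{\min})^2$.

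First I would exploit the fact that $R$ is orthogonal (so $R^{-1} = R^T$ is an isometry). Since $\hat{\bar{X}}_{\pi_{srk}} = R^{-1}\hat{\bar{Z}}$ and $\bar{X} = R^{-1}\bar{Z}$, we have
\[
\bigl\|\hat{\bar{X}}_{\pi_{srk}} - \bar{X}\bigr\|_2^2 \;=\; \bigl\|R^{-1}(\hat{\bar{Z}} - \bar{Z})\bigr\|_2^2 \;=\; \bigl\|\hat{\bar{Z}} - \bar{Z}\bigr\|_2^2.
\]
Taking expectation (over both $R$ and the private randomness of quantization) shows that it suffices to bound $\EE\|\hat{\bar{Z}} - \bar{Z}\|_2^2$.

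Next I would condition on $R$ and treat $Z_1,\ldots,Z_n$ as a fixed collection of vectors to which the $k$-level stochastic quantization of Section~\ref{sec:sk_2} is applied with scale parameter $s_i = Z_i^{\max} - Z_i^{\min}$. The key intermediate inequality inside the proof of Theorem~\ref{thm:sk} (equation~\eqref{eq:kbound}) gives, conditional on $R$,
\[
\EE\bigl[\|\hat{\bar{Z}} - \bar{Z}\|_2^2 \,\big|\, R\bigr] \;\le\; \frac{1}{n^2}\sum_{i=1}^n \frac{d\, s_i^2}{4(k-1)^2}.
\]

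Finally, I would apply the elementary bound $(a-b)^2 \le 2a^2 + 2b^2$ to obtain $s_i^2 = (Z_i^{\max} - Z_i^{\min})^2 \le 2\bigl((Z_i^{\max})^2 + (Z_i^{\min})^2\bigr)$, and take expectation over $R$ to conclude
\[
\cE(\pi_{srk}(R), X^n) \;\le\; \frac{d}{2n^2(k-1)^2}\sum_{i=1}^n \EE_R\!\left[(Z_i^{\max})^2 + (Z_i^{\min})^2\right].
\]
There is no real obstacle here; the only subtle point is being careful that the inner quantization step is independent of $R$ given $Z_i$, so the conditional application of Theorem~\ref{thm:sk} is legitimate and the outer expectation over $R$ produces exactly the stated expression.
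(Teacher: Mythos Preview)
Your proposal is correct and follows essentially the same approach as the paper: use the isometry of $R$ to pass to $Z$-space, condition on the rotation and invoke inequality~\eqref{eq:kbound} with $s_i = Z_i^{\max}-Z_i^{\min}$, then apply $(a-b)^2\le 2a^2+2b^2$ and take expectation over $R$.
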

\begin{proof}
\begin{align*}
\cE(\pi_{srk}, X^n) 
  & = \EE_{\pi} \norm{\hat{\bar{X}} - \bar{X}}^2   \\
&   = \EE_{\pi} \norm{R^{-1}\hat{\bar{Z}} - R^{-1}\bar{Z}}^2 \stackrel{(a)}{=} \EE_{\pi} \norm{\hat{\bar{Z}} - \bar{Z}}^2 \\ 
  &    \stackrel{(b)}{=} \EE_{\pi} \left[  \norm{\hat{\bar{Z}} - \bar{Z}}^2 | Z^n \right]\\ 
  & \leq \frac{d}{4n^2(k-1)^2} \sum^n_{i=1} \EE_R[(\maxZi - \minZi)^2],
\end{align*}
where the last inequality follows Eq.~\eqref{eq:kbound} and the value of $s_i$.
$(a)$ follows from the fact that the rotation does not change the norm of the
vector, and $(b)$ follows from the tower law of expectation.  
The lemma
follows from observing that
\[
(\maxZi - \minZi)^2 \leq 2(\maxZi)^2 + 2 (\minZi)^2. \qedhere
\]

\end{proof}
To obtain strong bounds, we need to find an orthogonal matrix $R$ that
achieves low $(\maxZi)^2$ and $(\minZi)^2$.  In addition, due to the fact that $d$
can be huge in practice, we need a type of orthogonal matrix that permits
fast matrix-vector products. Naive orthogonal matrices that
support fast multiplication such as block-diagonal matrices often
result in high values of $(\maxZi)^2$ and $(\minZi)^2$.  
Motivated by recent works
of structured matrices~\cite{AilonL09, yu2016orthogonal}, we propose to use a special type of orthogonal matrix $R =
HD$, where $D$ is a random diagonal matrix
with \iid \ Rademacher entries ($\pm 1$ with probability $0.5$). $H$ is a Walsh-Hadamard matrix \cite{horadam2012hadamard}.
The Walsh-Hadamard matrix of dimension $2^m$ for $m \in \mathcal{N}$ is given by the recursive formula, 
\begin{align*}
H(2^1) = \begin{bmatrix}
1 & 1\\
1 & -1\end{bmatrix}, 
H(2^m) = \begin{bmatrix}
H(2^{m-1}) &  H(2^{m-1})\\
H(2^{m-1})  & -H(2^{m-1})\end{bmatrix}.
\end{align*}

Both applying the rotation and
inverse rotation take $\cO(d \log d)$ time and $\cO(1)$ additional space
(with an in-place algorithm).  The next lemma bounds $\EE
  \left(\maxZi\right)^2$  and $\EE
  \left(\minZi\right)^2$ for this choice
of $R$. The lemma is similar to that of \cite{AilonL09}, and we
give the proof in Appendix~\ref{app:maxz} for completeness.
\begin{Lemma}
\label{lem:maxz}
Let $R= HD$, where $D$ is a diagonal matrix with independent Radamacher
random variables. For every $i$ and every sequence $X^n$,
  \[
  \EE\left[(\minZi)^2\right]= \EE\left[(\maxZi)^2\right]
  \leq \frac{\norm{X_i}^2_2 (2\log d + 2)}{d}.
  \]
\end{Lemma}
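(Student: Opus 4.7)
The plan is to reduce the problem to a standard sub-Gaussian maximal inequality. First I would normalize: the orthogonal rotation used is really $R = \frac{1}{\sqrt{d}}HD$, so that $Z_i(j) = \frac{1}{\sqrt{d}}\sum_{k=1}^{d} H_{jk}\,\epsilon_k\, X_i(k)$, where $\epsilon_k$ are i.i.d.\ Rademacher and $H_{jk} \in \{\pm 1\}$. Since $|H_{jk}\epsilon_k X_i(k)|/\sqrt{d} \leq |X_i(k)|/\sqrt{d}$ and the summands are independent and centered, Hoeffding's inequality gives, for every fixed $j$ and every $t \geq 0$,
\[
\Pr\!\Paren{|Z_i(j)| \geq t \;\big|\; X_i} \;\leq\; 2\exp\!\Paren{-\frac{t^2 d}{2\norm{X_i}_2^2}}.
\]

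Next I would observe the symmetry $(-D) \stackrel{d}{=} D$, which implies $Z_i \stackrel{d}{=} -Z_i$, and therefore $\maxZi \stackrel{d}{=} -\minZi$. In particular $\EE[(\maxZi)^2] = \EE[(\minZi)^2]$, so it suffices to bound either one, and both are upper bounded by $\EE[\|Z_i\|_\infty^2]$. A union bound over the $d$ coordinates yields
\[
\Pr\!\Paren{\|Z_i\|_\infty \geq t \;\big|\; X_i} \;\leq\; \min\!\Sets{1,\; 2d\exp\!\Paren{-\frac{t^2 d}{2\norm{X_i}_2^2}}}.
\]

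I would then convert this tail bound into a second-moment bound via the identity $\EE[\|Z_i\|_\infty^2] = \int_0^\infty 2t\,\Pr(\|Z_i\|_\infty \geq t)\,dt$, splitting the integral at the threshold $t_0$ where the two terms in the minimum coincide, namely $t_0^2 = \frac{2\norm{X_i}_2^2 \log(2d)}{d}$. The first piece contributes $t_0^2$; in the second piece the substitution $u = t^2 d /(2\norm{X_i}_2^2)$ reduces the integral to a single exponential that cancels the factor $2d$, leaving a residual of order $\norm{X_i}_2^2/d$. Combining, one obtains a bound of the form $\frac{\norm{X_i}_2^2(2\log d + c)}{d}$ for an absolute constant $c$, matching the lemma's $2\log d + 2$ statement (absorbing constants into the $\log$ factor).

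The main obstacle is purely bookkeeping of constants: choosing the split point $t_0$ and carefully handling the $\log(2d)$ versus $\log d$ to recover the claimed $2\log d + 2$ factor rather than a loose $\cO(\log d)$. Everything else is a direct application of Hoeffding plus tail integration; the orthogonality of $\frac{1}{\sqrt{d}}H$ is only used implicitly in the normalization (to ensure that $Z_i$ has the same $\ell_2$ norm as $X_i$, which keeps the sub-Gaussian proxy equal to $\norm{X_i}_2^2/d$).
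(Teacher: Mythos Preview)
Your approach is correct and yields the right order, but it is genuinely different from the paper's proof, and as you anticipated it does not quite recover the exact constant $2\log d + 2$. Carrying out your tail integration with the split at $t_0^2 = \tfrac{2\norm{X_i}_2^2 \log(2d)}{d}$ gives
\[
\EE\bigl[\|Z_i\|_\infty^2\bigr] \;\leq\; t_0^2 + \frac{2\norm{X_i}_2^2}{d}
\;=\; \frac{\norm{X_i}_2^2\,(2\log d + 2 + 2\log 2)}{d},
\]
so you end up with $2\log d + 2 + 2\log 2$ rather than $2\log d + 2$. The extra $2\log 2$ comes from the factor $2$ in the two-sided Hoeffding bound (equivalently, from union-bounding over $2d$ events rather than $d$), and it does not seem to disappear by merely shifting the split point; your remark about ``absorbing constants into the $\log$ factor'' does not literally close this gap.

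The paper sidesteps this by a different decomposition: it writes $\EE[(\maxZi)^2] = \Var(\maxZi) + (\EE[\maxZi])^2$ and bounds the two pieces separately. The variance is controlled by the Efron--Stein inequality (flipping one $D(j)$ changes $\maxZi$ by at most $2|X_i(j)|/\sqrt{d}$), giving $\Var(\maxZi)\le 2\norm{X_i}_2^2/d$. The mean is bounded by the standard MGF argument $\EE[\maxZi]\le \tfrac{1}{\beta}\log\sum_j \EE e^{\beta Z_i(j)}$ with optimization over $\beta$, yielding $(\EE[\maxZi])^2 \le 2\norm{X_i}_2^2\log d/d$. Because this last step bounds $\EE[\maxZi]$ (a one-sided maximum, no absolute value) directly, the union is over $d$ terms rather than $2d$, and the constant comes out exactly as $2\log d + 2$. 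So: your route is more elementary and self-contained (no Efron--Stein), while the paper's variance--mean split is what buys the sharp additive $+2$.
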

Combining the above two lemmas yields the main result.
\begin{Theorem}
  \label{thm:rhd}
  For any $X^n$, $\pi_{srk} (HD)$ protocol satisfies,
 \[
\cE(\pi_{srk}(HD), X^n) \leq \frac{2\log d + 2}{n(k-1)^2} \cdot \frac{1}{n}
\sum^n_{i=1}\norm{X_i}^2_2.
  \]
\end{Theorem}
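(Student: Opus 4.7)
The plan is to obtain Theorem~\ref{thm:rhd} by directly composing the two lemmas that immediately precede it in the excerpt. The first lemma already reduces the MSE of $\pi_{srk}(R)$ to a quantity involving only the second moments of $\maxZi$ and $\minZi$ under the random rotation $R$, namely
\[
\cE(\pi_{srk}(R), X^n) \leq \frac{d}{2n^2(k-1)^2} \sum_{i=1}^n \EE_R\bigl[(\maxZi)^2 + (\minZi)^2\bigr].
\]
The second lemma, Lemma~\ref{lem:maxz}, specializes $R = HD$ and controls each of those second moments by $\frac{\|X_i\|_2^2(2\log d + 2)}{d}$. So the whole argument is essentially a substitution.

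Concretely, I would first specialize the first lemma to $R=HD$, then apply Lemma~\ref{lem:maxz} separately to the $\EE[(\maxZi)^2]$ and $\EE[(\minZi)^2]$ terms (each is bounded by the same quantity), so that their sum contributes a factor of $2$. Tracking constants,
\[
\cE(\pi_{srk}(HD), X^n) \leq \frac{d}{2n^2(k-1)^2} \sum_{i=1}^n 2 \cdot \frac{\|X_i\|_2^2 (2\log d + 2)}{d}.
\]
The explicit $d$ in the numerator cancels the $d$ in the denominator coming from Lemma~\ref{lem:maxz}, and the factor of $2$ cancels the $1/2$. After pulling one $1/n$ out front, the right-hand side simplifies to $\frac{2\log d + 2}{n(k-1)^2} \cdot \frac{1}{n}\sum_{i=1}^n \|X_i\|_2^2$, which is exactly the claimed bound.

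There is essentially no obstacle at this stage, since both ingredients are already in place. The only bookkeeping to keep straight is that Lemma~\ref{lem:maxz} requires the hypothesis that $s_i$ in the preceding lemma is the quantity $Z^{\max}_i - Z^{\min}_i$ (so Theorem~\ref{thm:sk}'s hypothesis $X^{\max}_i - X^{\min}_i \leq s_i \leq \sqrt{2}\|X_i\|_2$ is satisfied in the rotated coordinates, which it is by \eqref{eq:normbound} applied to $Z_i$ since $\|Z_i\|_2 = \|X_i\|_2$). Once that hypothesis is noted, the substitution and arithmetic above finish the theorem.
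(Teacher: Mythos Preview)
Your proposal is correct and matches the paper's approach exactly: the paper simply states ``Combining the above two lemmas yields the main result,'' and your write-up spells out precisely that substitution and arithmetic. The only minor remark is that your bookkeeping note about the hypothesis of Theorem~\ref{thm:sk} is unnecessary, since the preceding lemma already relies on Eq.~\eqref{eq:kbound} with $s_i = Z^{\max}_i - Z^{\min}_i$, which requires only the lower bound $s_i \geq Z^{\max}_i - Z^{\min}_i$ (trivially met with equality) and not the upper bound $s_i \leq \sqrt{2}\|Z_i\|_2$.
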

\section{Variable length coding}
\label{sec:sk}

Instead of preprocessing the data via a rotation matrix as in
$\pi_{srk}$, in this section we propose to use a variable length coding strategy to
minimize the number of bits.

Consider the stochastic $k$-level quantization technique.
A natural way of transmitting $Y_i$ is sending the bin number for each
coordinate, thus the total number of bits the algorithm sends per
transmitted coordinate would be $d\lceil \log_2 k
\rceil$. This naive implementation is sub-optimal.  Instead, we propose to further encode the transmitted values
using universal compression schemes~\cite{KrichevskyT81,
  FalahatgarJOPS15}. We first encode $h_r$, the number of times each
quantized value $r$ has appeared, and then use arithmetic or Huffman
coding corresponding to the distribution 
$
p_r = \frac{h_r}{d}.
$
We denote this scheme by $\pi_{svk}$. Since we quantize vectors the same way in $\pi_{sk}$ and $\pi_{svk}$, the MSE of $\pi_{svk}$ is also given by Theorem~\ref{thm:sk}. We now bound the communication cost.
\begin{Theorem}
  \label{thm:arith}
Let $s_i = \sqrt{2} \norm{X_i}$.  There exists an implementation of $\pi_{svk}$ such that $\cC(\pi_{svk}, X^n)$ is at most
\begin{small}
\[
n  \left(d \left(2+ \log_2\left(\frac{(k-1)^2}{2d }+ \frac{5}{4} \right) \right)+ k  \log_2\frac{(d
  + k)e}{k} + \tilde\cO(1) \right).
  \]
  \end{small}
\end{Theorem}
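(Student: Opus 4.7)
The plan is to encode one client's message in three pieces and bound each. First, the client sends $X_i^{\min}$ and $s_i=\sqrt{2}\norm{X_i}_2$ to $\tilde{\cO}(1)$ precision, exactly as in the proof of Lemma~\ref{lem:sb_com}. Second, it sends the histogram $h=(h_0,\ldots,h_{k-1})$ of quantization bin counts; there are $\binom{d+k-1}{k-1}$ such histograms, so enumerating them costs at most $\log_2\binom{d+k-1}{k-1}\le k\log_2((d+k)e/k)$ bits via the standard inequality $\binom{n}{k}\le(ne/k)^k$. Third, given $h$, the client arithmetic-codes the sequence of bin indices $q_1,\ldots,q_d$ (where $Y_i(j)=B_i(q_j)$) against the empirical distribution $p_r=h_r/d$, which costs at most $d H(p)+2$ bits in expectation. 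Summing and multiplying by $n$, the theorem reduces to showing $\EE[H(p)]\le 2+\log_2\bigl((k-1)^2/(2d)+5/4\bigr)$.

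The main work is this entropy bound. Since Shannon entropy is concave in the distribution, Jensen gives $\EE[H(p)]\le H(\bar p)$ where $\bar p_r=\EE[p_r]=\tfrac{1}{d}\sum_j\PP(q_j=r)$. I would view $\bar p$ as a distribution on $\{0,\ldots,k-1\}$ and bound its variance via the law of total variance: writing $\Delta_i=s_i/(k-1)$ and $\mu_p=\sum_r r\bar p_r$, one obtains $\Var(\bar p)=\tfrac{1}{d}\sum_j(\EE[q_j]-\mu_p)^2+\tfrac{1}{d}\sum_j\Var(q_j)$. The first piece equals $\tfrac{1}{d\Delta_i^2}\sum_j(X_i(j)-\bar X_i)^2\le\norm{X_i}_2^2/(d\Delta_i^2)=(k-1)^2/(2d)$ by the definition of $s_i$. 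The second piece is at most $1/4$ since each $q_j$ is a two-point random variable, hence $\Var(\bar p)\le(k-1)^2/(2d)+1/4$.

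To turn this variance bound into an entropy bound of the target form $\log_2(V+\text{const})$ (rather than the sharper $\tfrac12\log_2 V$ that a Gaussian proxy would yield), I would invoke Gibbs' inequality against a heavy-tailed proxy $q_r\propto 1/((r-c)^2+a)$ on $r\in\ZZ$, with $c$ the integer nearest to $\mu_p$ (so recentering contributes at most $1/4$ to the second moment) and $a>0$ a small constant. Gibbs followed by Jensen on the concave function $\log_2$ then gives $H(\bar p)\le\log_2 Z+\log_2\bigl(\EE_{\bar p}[(r-c)^2]+a\bigr)$ with $Z=\sum_{r\in\ZZ}1/((r-c)^2+a)$. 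Choosing $a=3/4$ makes the inner quantity at most $(k-1)^2/(2d)+5/4$, and direct computation of $Z$ via the partial-fractions expansion of $\coth$ gives $Z<4$, so $\log_2 Z<2$, matching the stated bound.

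The hardest step is arranging the constants $5/4$ and $2$: neither is forced by any single inequality but each emerges from a tight combination of three small losses (the $1/4$ from rounding $c$ to an integer, the additive $a$ baked into the Cauchy-style proxy, and $\log_2 Z$ itself), so the choice of $a$ must be coordinated carefully. Everything else is routine and essentially recycled from earlier in the paper: the binomial bound on $\log_2\binom{d+k-1}{k-1}$, the $O(1)$ overhead of arithmetic coding, and the $\tilde{\cO}(1)$ precision for $s_i$ and $X_i^{\min}$ are all standard.
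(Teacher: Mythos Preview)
Your proposal is correct and lands on exactly the stated constants, but the order of operations differs from the paper's proof in an instructive way.

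The paper does \emph{not} average first. It applies the Gibbs/Cauchy-proxy bound directly to the random empirical distribution $p$, with the proxy $q_r\propto 1/((a+br)^2+\delta)$ where $a=(k-1)X_i^{\min}$, $b=s_i$, and $\delta=s_i^2$; this centers the proxy at the bin where $Y_i(j)\approx 0$ rather than at $\mu_p$. The resulting bound reads $H(p)\le 2+\log_2\bigl(\tfrac1d\sum_r h_r(a+br)^2/s_i^2+1\bigr)$, and the inner sum is recognized as $(k-1)^2\norm{Y_i}_2^2/d$. Only then does the paper take expectations, using $\EE\norm{Y_i}_2^2\le\norm{X_i}_2^2+\tfrac{d}{4}\Delta_i^2$ and Jensen on the concave $\log_2$ to finish. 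In short: the paper uses Jensen on $\log$ at the very end, whereas you use Jensen on $H$ at the very beginning.

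Your route buys a clean decomposition of the spread into ``between-coordinate'' and ``quantization-noise'' pieces via the law of total variance, and the proxy centered at an integer makes the $Z$-sum a textbook $\coth$ evaluation. The paper's route avoids introducing $\bar p$ altogether and ties the entropy directly to $\norm{Y_i}_2^2$, which is perhaps more transparent about why $s_i=\sqrt{2}\norm{X_i}_2$ is the right scale. Both reach $(k-1)^2/(2d)+5/4$ and $\log_2 Z<2$ with essentially the same slack.
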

\begin{proof}
As in Lemma \ref{lem:sb_com}, $\tilde\cO(1)$ bits are used to transmit the $s_i$'s and $X^{\min}_i$.
  Recall that $h_r$ is the number of coordinates that are quantized
  into bin $r$, and $r$ takes $k$ possible values. Furthermore,
 $\sum_r h_r = d$ and for each $r$, $h_r \geq 0$. Thus the number of bits necessary to represent the $h_r$'s is
  \[
 \left\lceil \log_2 {d + k - 1 \choose k-1 } \right \rceil \leq k \log_2 \frac{(d+k)e}{k}.
  \]
 Once we have compressed the $h_r$'s, we use arithmetic coding corresponding to the distribution $p_r = h_r/d$
  to compress and transmit bin values for each coordinate.  The total number of bits arithmetic coding uses is \cite{Mackay03}
\[
d \sum^{k-1}_{r=0} \frac{h_r}{d} \log_2 \frac{d}{h_r} + 2.
\]
Let $a =(k-1) X^{\min}_i$ and $b = s_i$. Note that if $Y_i(j)$ belongs to bin $r$, $(a+br)^2 = (k-1)^2 Y^2_i(j)$ and $h_r$ is the number of coordinates quantized into bin $r$.
Hence $\sum_r h_r (a + b r)^2$  is the scaled norm-square of $Y_i$, i.e.,
\begin{align*}
\sum_r h_r (a+br)^2 
& = (k-1)^2 \sum^d_{j=1} Y^2_i(j) \\
& = \sum^d_{j=1} \left( {(X_i(j)+ \alpha(j)) (k-1)} \right)^2,
\end{align*}
where the $\alpha(j) = Y_i(j) - X_i(j)$. Taking expectations on both sides and using the fact that the
$\alpha(j)$ are independent zero mean random variables over a range of
$s_i/(k-1)$, we get
\begin{align}
   \EE\sum_r h_r (a+br)^2 & =  \sum^d_{j=1}  \EE(X_i(j)^2 + \alpha(j)^2)(k-1)^2 \nonumber \\
  & \leq  \norm{X_i}^2_2  \left((k-1)^2+ \frac{d}{2} \right). \label{eq:z1}
\end{align}
We now bound $\sum_r \frac{h_r}{d} \log_2 \frac{d}{h_r}$ in terms of $\sum_r h_r (a+br)^2 $. Let $p_r = h_r/d$ and $\beta = \sum^{k-1}_{r=0} 1/((a+br)^2+\delta)$.
Note that
\begin{align*}
\sum_r p_r \log_2 \frac{1}{p_r}
&= \sum_r p_r \log_2 \frac{1/(((a+br)^2 +\delta)\beta)}{p_r} \\
& + \sum_r p_r \log_2 (((a+br)^2+\delta)\beta) \\
& \leq \sum_r p_r \log_2(((a+br)^2 +\delta)\beta) \\
& \leq \log_2 (\sum_r p_r (a+br)^2 + \delta) + \log_2 \beta,
\end{align*}
where the first inequality follows from the positivity of
KL-divergence.  Choosing $\delta = s^2_i$, yields $\beta \leq 4/s^2_i$ and
hence,
\begin{align}
\sum_r p_r \log_2 \frac{1}{p_r} 
& \leq \log_2 (\sum_r p_r (a+br)^2 + s^2_i) + \log_2(4/s^2_i) \nonumber \\
& = 2 + \log_2 (\sum_r p_r (a+br)^2/ s^2_i + 1). \label{eq:z2}
\end{align}
Combining the above set of equations, we get that the expected communication is bounded by
\begin{align*}
 \EE \left[d \sum^{k-1}_{r=0} \frac{h_r}{d} \log_2 \frac{d}{h_r} + 2 \right] 
& = d\cdot \EE \left[ \sum^{k-1}_{r=0} p_r \log_2 \frac{1}{p_r}\right] +2 \\
& \stackrel{(a)}{\leq}d\cdot \EE \left[ 2 + \log_2 (\sum_r p_r (a+br)^2/ s^2_i + 1) \right] +2  \\
&  \stackrel{(b)}{\leq} d \left( 2 + \log_2 \left( \EE \left[ \sum_r p_r (a+br)^2/s^2_i + 1 \right] \right) \right) +2   \\
& \leq d \left(2 + \log_2\left(\frac{(k-1)^2}{2d }+ \frac{5}{4} \right) \right) +2 ,
\end{align*}
where $(a)$ follows from Equation~\eqref{eq:z2} and $(b)$ follows from Jensen's inequality. The last inequality follows from Equation~\eqref{eq:z1}.
\end{proof}

Thus if $k = \sqrt{d} + 1$, the communication complexity is $\cO(nd)$
and the MSE is $\cO(1/n)$.

\section{Communication MSE trade-off}
\label{sec:sampling}

In the above protocols, all the clients transmit and hence the
communication cost scales linearly with $n$. Instead, we show that any
of the above protocols can be combined by client
sampling to obtain trade-offs between the MSE and the
communication cost. Note that similar analysis also holds for  sampling the coordinates.

Let $\pi$ be a protocol where the mean estimate is of the form:
\begin{equation}
  \label{eq:average_protocol}
\hat{\bar{X}} = R^{-1} \frac{1}{n} \sum^n_{i=1} Y_i.
\end{equation}
All three protocols we have discussed are of this form. Let
$\pi_p$ be the protocol where each client participates independently
with probability $p$. The server estimates $\bar{X}$ by
\[
\hat{\bar{X}}_{\pi_{p}} = R^{-1} \cdot \frac{1}{np} \sum_{i \in S} Y_i,
\]
where $Y_i$s are defined in the previous section and $S$ is the set of
clients that transmitted.

\begin{Lemma}
  \label{lem:sampling}
  For any set of vectors $X^n$ and protocol $\pi$ of the form  Equation~\eqref{eq:average_protocol}, its sampled version
  $\pi_p$ satisfies
  \[
  \cE(\pi_p, X^n) = \frac{1}{p} \cdot   \cE(\pi, X^n)  + \frac{1-p}{np} \sum^n_{i=1} \norm{X_i}^2_2.
  \]
  and
  \[
\cC(\pi_p, X^n) = p \cdot \cC(\pi, X^n).
  \]
\end{Lemma}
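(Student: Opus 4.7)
The plan is to dispose of the communication bound first—each client transmits with probability $p$ independent of the protocol randomness, so by linearity of expectation, $\cC_i(\pi_p, X_i) = p \cdot \cC_i(\pi, X_i)$, and summing over $i$ gives $\cC(\pi_p, X^n) = p \cdot \cC(\pi, X^n)$. That is the easy half and requires no further work.

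For the MSE, let $T_i \sim \Bern(p)$ be independent participation indicators so that $\hat{\bar{X}}_{\pi_p} = R^{-1} \cdot \frac{1}{np} \sum_{i=1}^n T_i Y_i$. Setting $Z_i = R X_i$ and using the orthogonality of $R$, I would rewrite
\[
\cE(\pi_p, X^n) = \EE \norm{\tfrac{1}{np} \sum_{i=1}^n T_i Y_i - \tfrac{1}{n} \sum_{i=1}^n Z_i}^2_2 = \frac{1}{n^2 p^2} \EE \norm{\sum_{i=1}^n (T_i Y_i - p Z_i)}^2_2.
\]
Each summand has mean zero (because $\EE T_i = p$ and, conditionally on $R$, $\EE Y_i = Z_i$), and the pairs $(T_i, Y_i)$ are mutually independent across $i$, so the cross terms vanish and only the diagonal $\frac{1}{n^2 p^2} \sum_i \EE \norm{T_i Y_i - p Z_i}^2_2$ remains.

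The crux is the decomposition $T_i Y_i - p Z_i = T_i (Y_i - Z_i) + (T_i - p) Z_i$. Expanding the squared norm and using the independence of $T_i$ from the quantization noise, the cross term factors through $\EE[Y_i - Z_i] = 0$ and vanishes; the two diagonal contributions evaluate to $p \EE \norm{Y_i - Z_i}^2_2$ and $p(1-p) \norm{X_i}^2_2$, using $T_i^2 = T_i$, $\Var(T_i) = p(1-p)$, and the key identity $\norm{Z_i}_2 = \norm{X_i}_2$. Applying the same orthogonality/independence argument to $\pi$ itself yields $\cE(\pi, X^n) = \frac{1}{n^2} \sum_i \EE \norm{Y_i - Z_i}^2_2$, so the first aggregated piece is precisely $\frac{1}{p} \cE(\pi, X^n)$ and the second is the advertised $\norm{X_i}^2_2$ correction.

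The only (mild) subtlety is allowing $R$ to be random, as in $\pi_{srk}$. I would handle it by conditioning on $R$ before taking the squared-norm expectation: since $\norm{Z_i}^2_2 = \norm{X_i}^2_2$ pointwise in $R$, the correction term is $R$-free, while the outer $R$-expectation absorbs into $\EE \norm{Y_i - Z_i}^2_2$ and reproduces $\cE(\pi, X^n)$ in its unconditioned form. I do not foresee further obstacles.
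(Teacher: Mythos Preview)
Your proposal is correct and follows essentially the same route as the paper: both split the error into a sampling piece and a quantization piece, show the cross term vanishes via the zero-mean property of the quantization noise, and evaluate each piece separately. The only cosmetic differences are that you work in the rotated coordinates $(Z_i,Y_i)$ with explicit Bernoulli indicators $T_i$, whereas the paper stays in the original coordinates $(X_i,R^{-1}Y_i)$ and uses the random set $S$; your handling of the shared randomness in $R$ by conditioning is exactly what is needed and matches the paper's implicit treatment.
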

\begin{proof}
  The proof of communication cost follows from Lemma~\ref{lem:comrsb}
  and the fact that in expectation,  $np$ clients transmit.
We now bound the MSE.  Let $S$ be the set of clients
that transmit. The error $\cE(\pi_{p}, X^n) $ is
\begin{align*}
 \EE \left[ \norm{\hat{\bar{X}} -
    \bar{X}}^2_2 \right]  &= \EE \left[ \norm{\frac{1}{np} \sum_{i
      \in S} R^{-1}Y_i - \bar{X}}^2_2 \right] \\ &
{=}\EE \left[
  \norm{\frac{1}{np}\sum_{i \in S} X_i - \bar{X}}^2_2 + \frac{1}{n^2p^2} \norm{\sum_{i \in S}
    (R^{-1}Y_i - X_i)}^2_2\right],
\end{align*}
where the last equality follows by observing that $R^{-1}Y_i - X_i$ are
independent zero mean random variables and hence for any $i$,
$\EE[(R^{-1}Y_i - X_i)^T (\sum_{i \in S} X_i -
  \bar{X}) ] = 0$. The first term can be bounded as
\begin{align*}
   \EE  \norm{\frac{1}{np}\sum_{i \in S} X_i - \bar{X}}^2_2 &
 =\frac{1}{n^2} \sum^n_{i=1} \EE \norm{\frac{1}{p} X_i\indic_{i \in S} - X_i}^2_2  \\
  & =\frac{1}{n^2} \sum^n_{i=1} \left(p \frac{(1-p)^2}{p^2} \norm{X_i}^2_2 +  (1-p) \norm{X_i}^2_2 \right) \\
  & = \frac{1-p}{np} \cdot \frac{1}{n} \sum^n_{i=1} \norm{X_i}^2_2. 
\end{align*}
Furthermore, the second term can be bounded as
\begin{align*}
 \EE \left[\frac{1}{n^2p^2} \norm{\sum_{i \in S}
    (R^{-1}Y_i - X_i)}^2_2 \right] 
  & \stackrel{(a)}{=} \frac{1}{n^2p^2}\sum_{i \in S}\EE \left[ \norm{ 
    (R^{-1}Y_i -X_i)}^2_2 \right] \\
  & = \frac{1}{n^2p^2}\sum^n_{i=1} \EE \left[\norm{ 
    (R^{-1}Y_i -X_i)}^2_2 \indic_{i \in S} \right] \\
& = \frac{1}{n^2 p} \sum^n_{i=1} \EE\left[ \norm{R^{-1}Y_i - X_i}^2_2
    \right] \\
  & = \frac{1}{n^2 p}  \EE\left[ \norm{\sum^n_{i=1}(R^{-1}Y_i - X_i)}^2_2
  \right]   = \frac{1}{p} \cE(\pi, X^n)
\end{align*}
where the last equality follows from the assumption that $\pi$'s mean
estimate is of the form~\eqref{eq:average_protocol}.  $(a)$ follows
from the fact that $R^{-1}Y_i - X_i$ are independent zero mean random
variables.  
\end{proof}
Combining the above lemma with Theorems~\ref{thm:sk} and~\ref{thm:arith}, and choosing $k =\sqrt{d}+1$ results in the following.
\begin{Corollary}
  \label{cor:upper}
  For every $c \leq nd(2 + \log_2(7/4))$, there exists a protocol $\pi$ such that
  $\cC(\pi,S^d) \leq c$ and 
  \[
  \cE(\pi, S^d) = \cO \left( \min \left(1, \frac{d}{c} \right)\right).
  \]
\end{Corollary}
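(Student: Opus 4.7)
The plan is to instantiate the variable length coding protocol $\pi_{svk}$ with $k = \sqrt{d}+1$ as the base protocol, and then apply the sampling transformation of Lemma~\ref{lem:sampling} with an appropriately chosen participation probability $p$.

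First, I would evaluate the guarantees of $\pi_{svk}$ at $k=\sqrt{d}+1$. From Theorem~\ref{thm:sk} (which also governs $\pi_{svk}$), using $\|X_i\|_2 \le 1$, the MSE is $\cO(1/n)$. From Theorem~\ref{thm:arith}, substituting $(k-1)^2/(2d) = 1/2$ gives that the per-client communication cost is dominated by $d(2 + \log_2(7/4))$ bits, with lower-order contributions of $\cO(\sqrt d \log d) + \tilde\cO(1)$. In particular, there is an absolute constant $C_0$ (which we may take as $2+\log_2(7/4)$ plus a small slack) such that $\cC(\pi_{svk}, S^d) \le n d C_0$ for all sufficiently large $d$.

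Next, I would set the sampling probability to $p = c/(n d C_0)$; by assumption $c \le n d C_0$ (up to the constant in the hypothesis), so $p \in (0,1]$. Lemma~\ref{lem:sampling} then immediately gives $\cC(\pi_p, S^d) = p\cdot \cC(\pi_{svk},S^d) \le c$. For the error, Lemma~\ref{lem:sampling} combined with $\cE(\pi_{svk}, S^d) = \cO(1/n)$ and $\frac1n\sum_i\|X_i\|_2^2 \le 1$ yields
\[
\cE(\pi_p, S^d) \le \frac{1}{p}\cdot \cO\!\left(\frac{1}{n}\right) + \frac{1-p}{np} = \cO\!\left(\frac{1}{np}\right) = \cO\!\left(\frac{d}{c}\right),
\]
which is the desired bound whenever $d/c \le 1$.

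Finally, for the regime $c < d$, the bound $\min(1, d/c) = 1$ is trivial: the zero estimator $\hat{\bar X} = 0$ uses no communication and satisfies $\cE \le \|\bar X\|_2^2 \le \frac{1}{n}\sum_i\|X_i\|_2^2 \le 1$ by Jensen's inequality. Taking the better of the two estimators gives the stated $\cO(\min(1, d/c))$ guarantee. The only non-routine piece of the argument is verifying that Lemma~\ref{lem:sampling} applies to $\pi_{svk}$, which it does because the estimator of $\pi_{svk}$ has exactly the averaging form of Equation~\eqref{eq:average_protocol} (with $R$ equal to the identity); everything else is bookkeeping around the choice of $k$ and $p$.
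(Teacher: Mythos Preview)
Your proposal is correct and follows exactly the approach the paper intends: instantiate $\pi_{svk}$ at $k=\sqrt d+1$, invoke Theorems~\ref{thm:sk} and~\ref{thm:arith} for the base MSE and cost, and then apply Lemma~\ref{lem:sampling} with $p$ chosen proportional to $c/(nd)$. Your handling of the $c<d$ regime via the zero estimator and your remark that $\pi_{svk}$ has the averaging form~\eqref{eq:average_protocol} (with $R=I$) are both sound details that the paper leaves implicit.
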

\section{Lower bounds}
\label{sec:lower}
The lower bound relies on the lower bounds on distributed
statistical estimation due to~\cite{ZhangYDJW13}.

\begin{Lemma}[\cite{ZhangYDJW13} Proposition 2]
  \label{lem:zhang}
  There exists a set of distributions $\cP_d$ supported on
  $\left[-\frac{1}{\sqrt{d}}, \frac{1}{\sqrt{d}} \right]^d$ such that
  if any centralized server wishes to estimate the mean of the
  underlying unknown distribution,
  then for any independent protocol $\pi$
    \[
\max_{p_d \in \cP_d} \EE\left[\norm{\theta(p_d) - \hat{\theta}_\pi}^2_2\right] \geq t \min \left(1, \frac{d}{\cC(\pi)} \right),
\]
where $\cC(\pi)$ is the communication cost of the protocol, $\theta(p_d)$ is the mean of $p_d \in \cP_d$,  and
$t$ is a positive constant.
\end{Lemma}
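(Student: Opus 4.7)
The plan is to prove this as a distributed-estimation minimax lower bound using Assouad's method (a multi-hypothesis version of Le Cam's two-point argument), combined with a strong data processing inequality that exploits the independence structure of the protocol to limit what the transcript can reveal about the hidden parameter.

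First I would construct a hard family indexed by the hypercube: for each $\nu \in \{-1,+1\}^d$, let $p_\nu$ be the product distribution on $\{-1/\sqrt{d}, +1/\sqrt{d}\}^d$ whose $j$-th coordinate takes value $+1/\sqrt{d}$ with probability $1/2 + \delta \nu_j$ for a small parameter $\delta$ to be optimized. Then $\theta(p_\nu) = (2\delta/\sqrt{d})\,\nu$, and for any two $\nu, \nu'$ the separation is $\norm{\theta(p_\nu)-\theta(p_{\nu'})}_2^2 = (4\delta^2/d)\cdot \mathrm{Ham}(\nu,\nu')$. A standard Assouad reduction then lower bounds the minimax MSE by a constant times $\delta^2 \cdot (1/d) \sum_{j=1}^d \Pr[\hat{\nu}_j \neq \nu_j]$, where $\hat{\nu}_j$ is the sign of the $j$-th coordinate of $\hat{\theta}_\pi$ under a uniform prior on $\nu$.

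Next I would lower bound the per-coordinate testing error via Fano/Le Cam, which reduces the task to upper bounding $\sum_j I(\nu_j; T)$, where $T=(T_1,\ldots,T_n)$ is the full transcript. Because $\pi$ is an independent protocol, $T_i$ is a function of $X_i$ alone, so subadditivity of mutual information and the data processing inequality give
\[
I(\nu; T) \le \sum_{i=1}^n I(X_i; T_i) \le \sum_{i=1}^n \cC_i(\pi, X_i) = \cC(\pi).
\]
Plugging this crude bound into Fano yields only a lower bound of order $d/\cC(\pi)$ after setting $\delta$ constant, which already suffices in the small-$\cC$ regime; the $\min(1,\cdot)$ truncation simply reflects the trivial bound obtained by estimating by $0$.

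The main obstacle, and the technical heart of the argument, is the strong data processing step. To reach the sharp constant inside the $\min$, one must refine $I(X_i; T_i)$ to $O(\delta^2 \cdot \cC_i(\pi,X_i))$, capturing the fact that when $\delta$ is small the distributions $p_\nu$ are barely distinguishable and each transmitted bit carries only $O(\delta^2)$ nats about $\nu$. This is obtained by a $\chi^2$-tensorization argument tailored to the Bernoulli family: one bounds the $\chi^2$-divergence between the marginal of $T_i$ under $p_\nu$ and under $p_{-\nu}$ coordinate-by-coordinate, then converts back to KL/mutual information. Combining, $\sum_j I(\nu_j; T) \lesssim \delta^2 \cC(\pi)$, Fano forces per-coordinate error bounded away from $1/2$ whenever $\delta^2 \cC(\pi) \lesssim d$, and choosing $\delta^2 = \Theta(\min(1, d/\cC(\pi)))$ yields the claimed $\Omega(\min(1, d/\cC(\pi)))$ lower bound on the MSE. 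Since the statement is cited from Zhang et al., I would simply invoke their Proposition 2 for the SDPI step rather than re-deriving the $\chi^2$ tensorization.
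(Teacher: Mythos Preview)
The paper does not prove this lemma at all: it is stated as a direct citation of Proposition~2 in \cite{ZhangYDJW13} and used as a black box in the proof of Theorem~\ref{thm:lower}. So there is no ``paper's own proof'' to compare against.

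Your sketch is nonetheless a faithful outline of how the result is established in the cited reference: a hypercube (Assouad) construction of product Bernoulli distributions on $\{\pm 1/\sqrt{d}\}^d$, reduction of MSE to coordinate-wise testing, and then the key strong data processing inequality showing that each transmitted bit reveals only $O(\delta^2)$ information about $\nu$, so that $\sum_j I(\nu_j;T)\lesssim \delta^2\,\cC(\pi)$. Optimizing $\delta$ gives the $\min(1,d/\cC(\pi))$ bound. One minor slip: the inequality you wrote as $I(\nu;T)\le \sum_i I(X_i;T_i)$ is not the right chain; what you want is $I(\nu;T)\le \sum_i I(\nu;T_i)$ (using conditional independence of the $T_i$ given $\nu$ under an independent protocol), followed by the SDPI step $I(\nu;T_i)\lesssim \delta^2 H(T_i)\le \delta^2\,\cC_i$. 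With that correction your plan matches the argument in \cite{ZhangYDJW13}, and your final remark that you would simply invoke their proposition is exactly what the present paper does.
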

\begin{Theorem}
  \label{thm:lower}
Let $t$ be the constant in Lemma~\ref{lem:zhang}.  For every $c \leq ndt/4$ and $ n \geq 4/t$,
  \[
  \cE(\Pi(c), S^d) \geq   \frac{t}{4} \min \left(1, \frac{d}{c} \right).
\]
\end{Theorem}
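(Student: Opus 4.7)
\bigskip

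\noindent\textbf{Proof proposal.} The plan is to reduce the distributed statistical mean estimation problem of Lemma~\ref{lem:zhang} to our empirical mean estimation problem, so that any good protocol $\pi \in \Pi(c)$ would give a statistical estimator beating the lower bound of~\cite{ZhangYDJW13}.

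First, I would observe that any distribution $p_d \in \cP_d$ is supported on the cube $[-1/\sqrt d, 1/\sqrt d]^d$, so an i.i.d.\ sample from $p_d$ has $\ell_2$ norm at most $1$ and therefore lies in $S^d$. Consequently, if $\pi \in \Pi(c)$ is any protocol for empirical mean estimation on $S^d$ and we feed it $n$ i.i.d.\ samples $X_1,\dots,X_n \sim p_d$, the resulting estimator $\hat{\bar X}_\pi$ is a valid independent protocol for estimating $\theta(p_d)$ with communication cost at most $c$. The core decomposition would be
\[
\EE\bigl[\|\hat{\bar X}_\pi - \theta(p_d)\|_2^2\bigr] \le 2\,\EE\bigl[\|\hat{\bar X}_\pi - \bar X\|_2^2\bigr] + 2\,\EE\bigl[\|\bar X - \theta(p_d)\|_2^2\bigr],
\]
where the first term is at most $2\,\cE(\pi,S^d)$ since each $X_i$ lies in $S^d$ a.s.

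Next I would bound the second term. Each coordinate $X_i(j)$ lies in $[-1/\sqrt d, 1/\sqrt d]$, so $\mathrm{Var}(X_i(j)) \le 1/d$ and hence $\EE\|X_i - \theta(p_d)\|_2^2 \le 1$. Since the $X_i$ are i.i.d., $\EE\|\bar X - \theta(p_d)\|_2^2 \le 1/n$. Applying Lemma~\ref{lem:zhang} to the worst-case $p_d \in \cP_d$ then gives
\[
t \min\!\left(1,\tfrac{d}{c}\right) \;\le\; 2\,\cE(\pi, S^d) + \tfrac{2}{n},
\]
and rearranging yields $\cE(\pi,S^d) \ge \tfrac{t}{2}\min(1, d/c) - \tfrac{1}{n}$.

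Finally, I would verify that the hypotheses $c \le ndt/4$ and $n \ge 4/t$ exactly ensure $\min(1, d/c) \ge 4/(nt)$: in the regime $d/c \le 1$ the condition $c \le ndt/4$ gives $d/c \ge 4/(nt)$, and in the regime $d/c \ge 1$ the condition $n \ge 4/t$ gives $1 \ge 4/(nt)$. Either way $\tfrac{1}{n} \le \tfrac{t}{4}\min(1, d/c)$, which lets us absorb the $-1/n$ term and conclude $\cE(\pi, S^d) \ge \tfrac{t}{4}\min(1, d/c)$. Taking the minimum over $\pi \in \Pi(c)$ finishes the proof. The only nontrivial step is the reduction itself — i.e., recognizing that the empirical-mean protocol can be black-boxed as a statistical-mean protocol at no extra communication cost, with the cross term vanishing because the bias of the sample mean relative to $\theta(p_d)$ is independent of the protocol's randomness; all remaining calculations are routine arithmetic.
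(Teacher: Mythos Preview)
Your proof is correct and essentially identical to the paper's: both reduce to Lemma~\ref{lem:zhang} by feeding i.i.d.\ samples from $p_d$ (which lie in $S^d$) into the empirical-mean protocol, use the same inequality $\|a-c\|^2 \le 2\|a-b\|^2 + 2\|b-c\|^2$ (the paper writes the rearranged form $\|a-b\|^2 \ge \tfrac12\|a-c\|^2 - \|b-c\|^2$), bound the sample-mean variance by $1/n$, and absorb the $1/n$ term using the hypotheses $c \le ndt/4$ and $n \ge 4/t$. Your aside about a ``cross term vanishing'' is unnecessary, since the triangle-type inequality you actually use requires no independence between the protocol randomness and $\bar X - \theta(p_d)$.
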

\begin{proof}
Given $n$ samples from the underlying distribution where each sample
belongs to $S^d$, it is easy to see that 
\[
 \EE\norm{\theta(p_d)- \hat{\theta}(p_d)}^2_2 \leq \frac{1}{n},
 \]
 where $\hat{\theta}(p_d)$ is the empirical mean of the observed samples.
 Let $\cP_d$ be the set of distributions in Lemma~\ref{lem:zhang}.
 Hence for any protocol $\pi$ there exists a distribution $p_d$ such that
   \begin{align*}
       \EE\norm{\hat{\theta}(p_d) - \hat{\theta}_\pi}^2_2 
     & \stackrel{(a)}{\geq}
    \frac{1}{2} \EE\norm{\theta (p_d) - \hat{\theta}_\pi}^2_2-
    \EE\norm{\theta(p_d)- \hat{\theta}(p_d)}^2_2\\
    & \stackrel{(b)}{\geq} \frac{t}{2}
    \min \left(1, \frac{d}{\cC(\pi)} \right) - \frac{1}{n} 
     \stackrel{(c)}{\geq}
    \frac{t}{4} \min \left(1, \frac{d}{\cC(\pi)} \right),
   \end{align*}
   $(a)$ follows from the fact that $2(a-b)^2 + 2(b-c)^2 \geq
   (a-c)^2$. $(b)$ follows from Lemma~\ref{lem:zhang} and $(c)$
   follows from the fact that $\cC(\pi, S^d) \leq ndt/4$ and $n \geq
   4/t$.
  \end{proof}
\vspace{-0.2cm}
Corollary~\ref{cor:upper} and Theorem~\ref{thm:lower} yield Theorem~\ref{thm:main}.
We note that the above lower bound holds only for communication cost $c < \cO(nd)$. Extending the results for larger values of $c$ remains an open problem.

At a first glance it may appear that combining structured random matrix and variable length encoding may improve the result asymptotically, and therefore
violates the lower bound. However, this is not true. 
Observe that variable length coding $\pi_{svk}$ and stochastic rotated quantization $\pi_{srk}$ use different aspects of the data: the variable length coding  uses the fact that bins with large values of index $r$ are less frequent. Hence, we can use fewer bits to encode frequent bins and thus improve communication. In this scheme bin-width ($s_i/(k-1)$) is $\sqrt{2} ||X_i||_2/(k-1)$.  
Rotated quantization uses the fact that rotation makes the min and max closer to each other and hence we can make bins with smaller width. In such a case, all the bins become “more or less equally likely” and hence variable length coding does not help. In this scheme bin-width ($s_i/(k-1)$) is $ (Z^{\max}_i - Z^{\min}_i)/(k-1) \approx ||X_i||_2 (\log d) /(k d)$, which is much smaller than bin-width for variable length coding. 
Hence variable length coding and random rotation cannot be used simultaneously.  

\begin{figure}[t]
  \centering
    \includegraphics[width=0.5\textwidth]{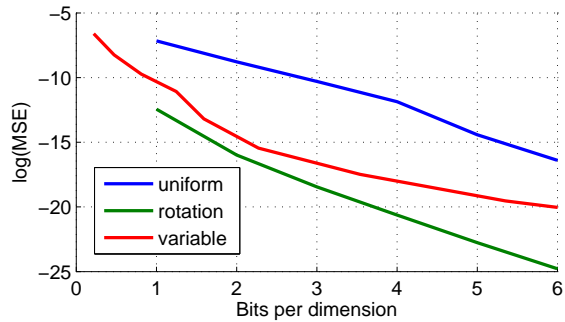}
    \caption{
       Distributed mean estimation on data generated from a Gaussian distribution. } \label{fig:bits}
  \end{figure}
\begin{figure}[t]
\vspace{-0.2cm}
\centering
\subfigure[\texttt{MNIST} $k=16$]{\includegraphics[width=0.23\textwidth]{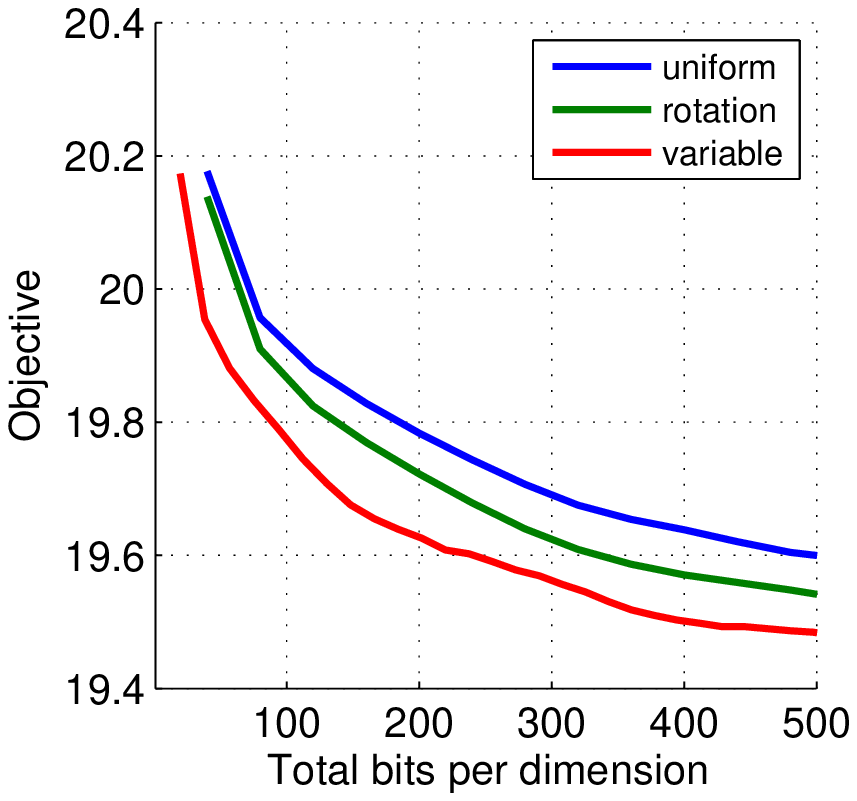}}
\hspace{-0.1cm}
\subfigure[\texttt{MNIST} $k=32$]{\includegraphics[width=0.23\textwidth]{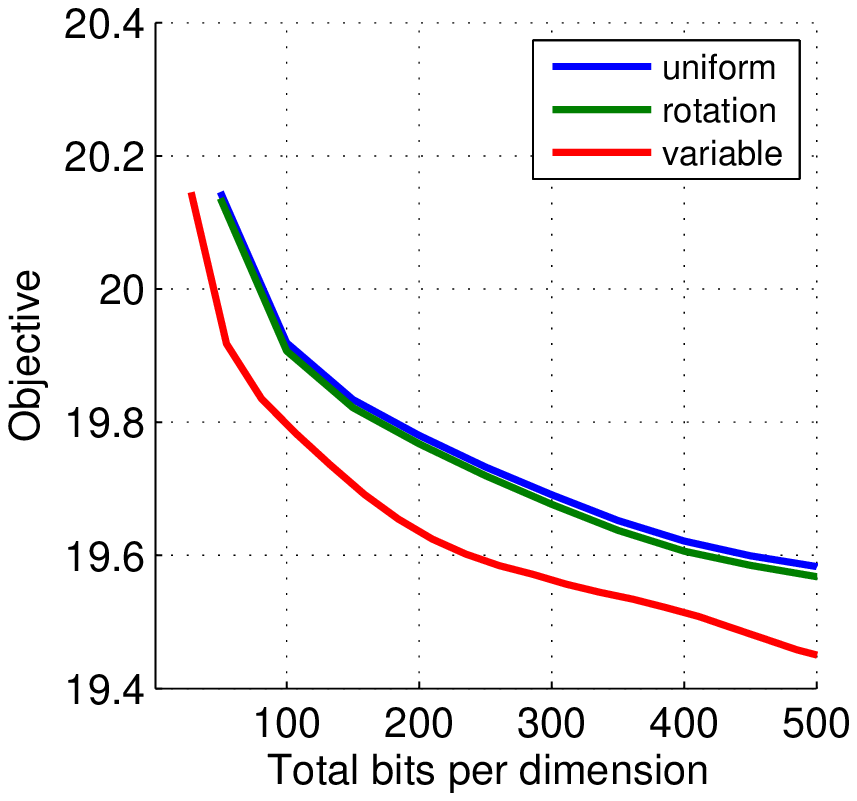}}
\hspace{-0.1cm}
\vspace{-0.2cm}
\subfigure[\texttt{CIFAR} $k=16$]{\includegraphics[width=0.23\textwidth]{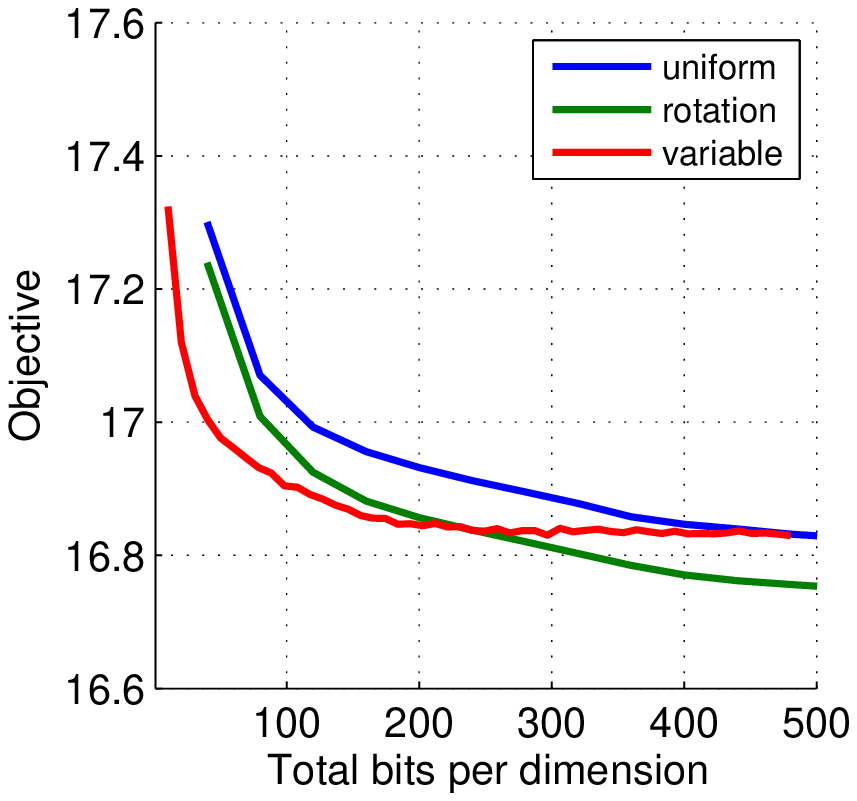}}
\hspace{-0.1cm}
\subfigure[\texttt{CIFAR} $k=32$]{\includegraphics[width=0.23\textwidth]{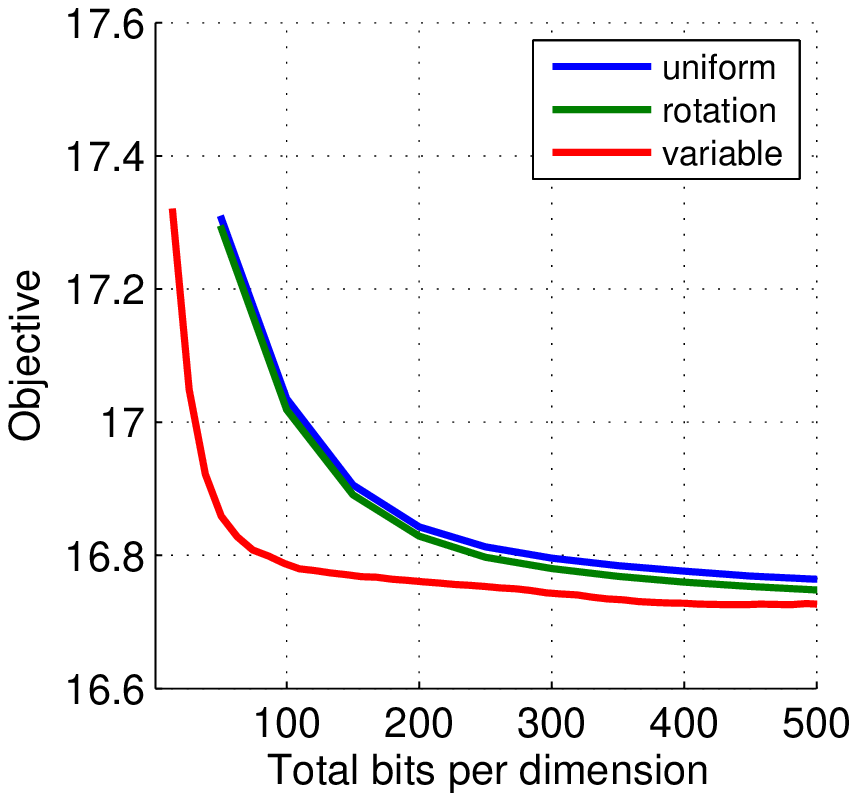}}
\hspace{-0.1cm}
\caption{Lloyd's algorithm with different types of quantizations. uniform: stochastic $k$-level, rotation: stochastic rotated, variable: variable-length coding. Here we test two settings: 16 quantization levels and 32 quantization levels. The x-axis is the averaged number of bits sent for each data dimension, i.e., communication cost (this scales linearly to the number of iterations), and the y-axis is the global objective of Lloyd's algorithm. }
\label{fig:kmeans}
\end{figure}
\begin{figure}[t]
\vspace{-0.2cm}
\centering
\subfigure[\texttt{MNIST} $k=16$]{\includegraphics[width=0.23\textwidth]{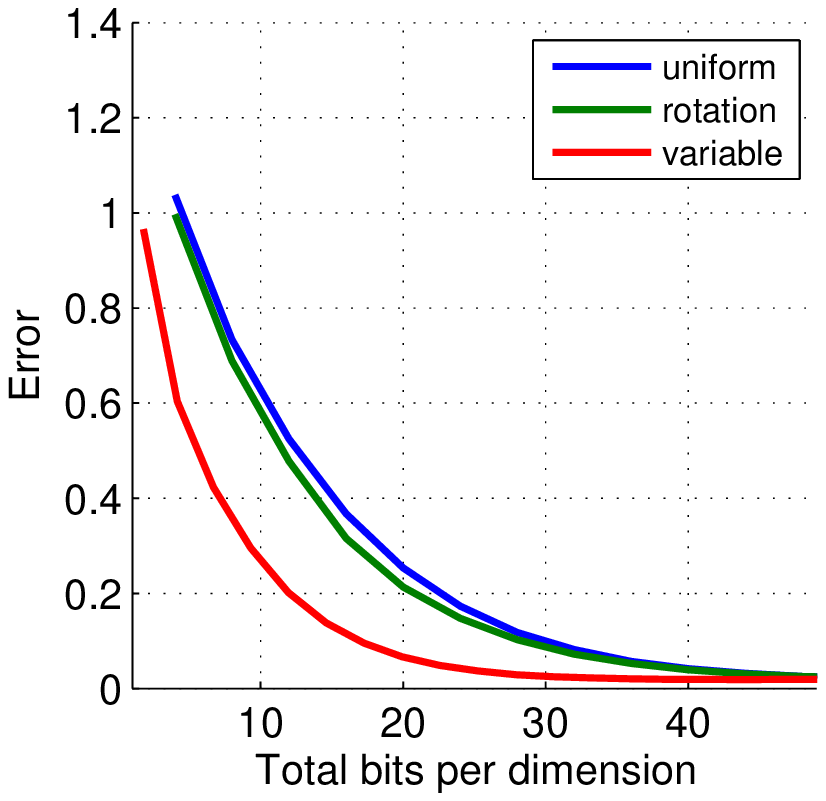}}
\hspace{-0.1cm}
\subfigure[\texttt{MNIST} $k=32$]{\includegraphics[width=0.23\textwidth]{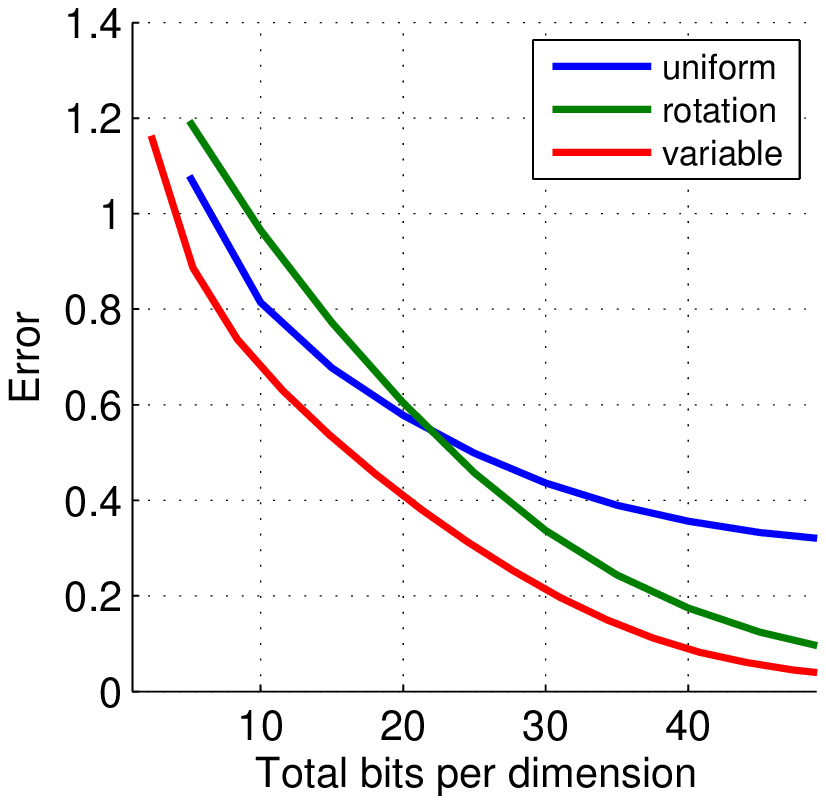}}
\hspace{-0.1cm}
\vspace{-0.2cm}
\subfigure[\texttt{CIFAR} $k=16$]{\includegraphics[width=0.23\textwidth]{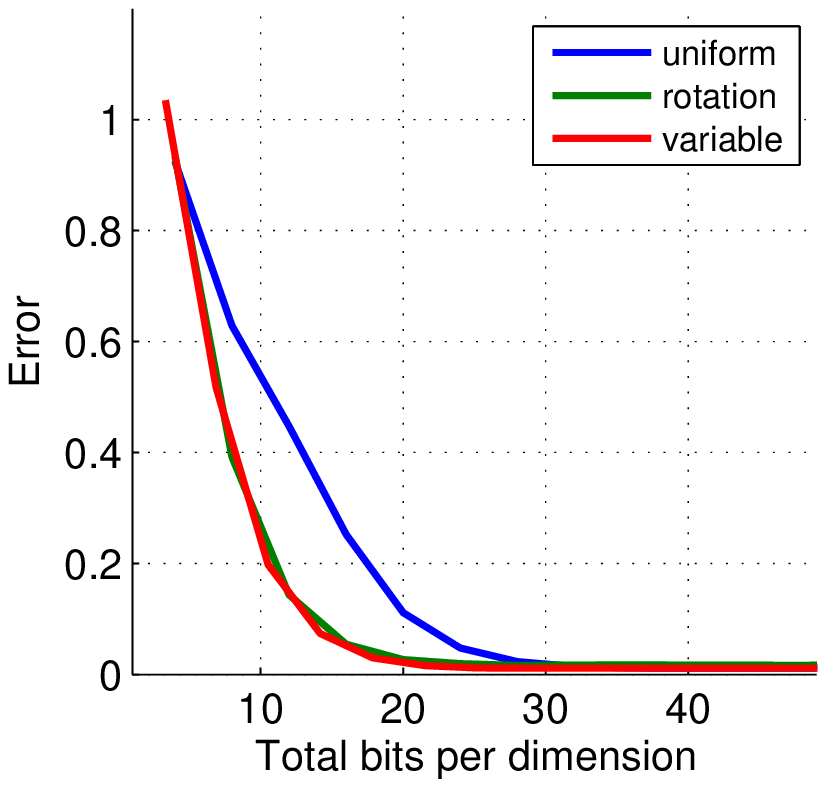}}
\hspace{-0.1cm}
\subfigure[\texttt{CIFAR} $k=32$]{\includegraphics[width=0.23\textwidth]{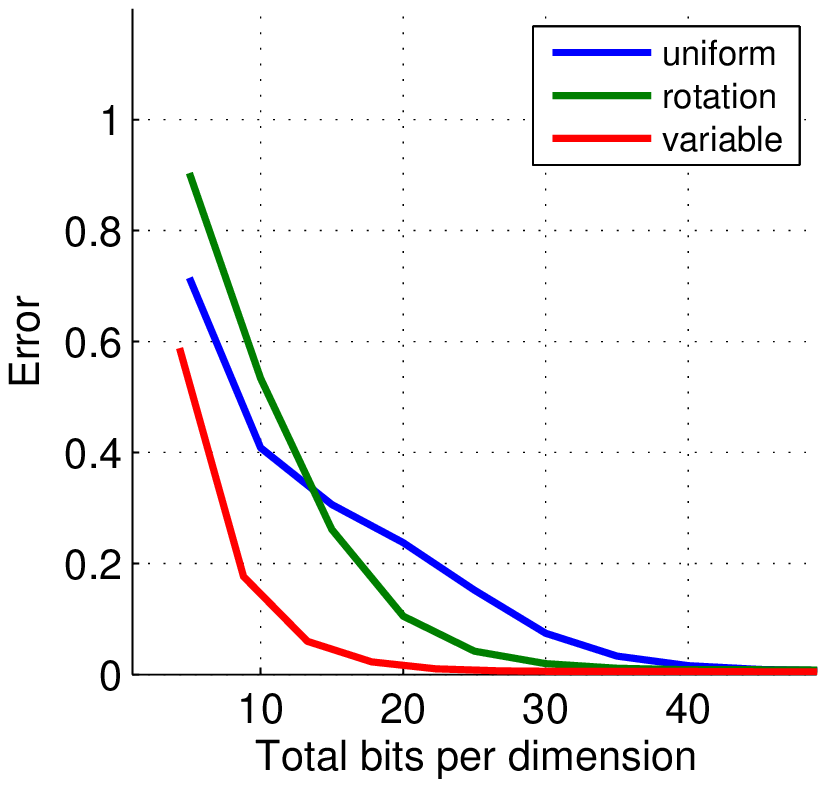}}
\hspace{-0.1cm}
\caption{Power iteration with different types of quantizations. uniform: stochastic $k$-level, rotation: stochastic rotated, variable: variable-length coding. Here we test two settings: 16 quantization levels and 32 quantization levels. The x-axis is the averaged number of bits sent for each data dimension, i.e., communication cost (this scales linearly to the number of iterations), and the y-axis is the $\ell_2$ distance between the computed eigenvector and the ground-truth  eigenvector. }
\label{fig:pca}
\end{figure}
\section{Practical considerations and applications}
\label{sec:experiments}

Based on the theoretical analysis, the variable-length coding method provides the lowest quantization error asymptotically when using a constant number of bits. Stochastic rotated quantization in practice may be preferred due to the (hidden) constant factors of the variable length code methods.
For example, considering quantizing a single vector $[-1, 1, 0, 0]$, stochastic rotated quantization can use 1 bit per dimension and achieves zero error: the rotated vector has only two values $0$, $2$ or $0$, $-2$. 
We further note that the rotated quantization is preferred when applied on ``unbalanced'' data, due to the fact that the rotation can correct the unbalancedness. We demonstrate this by generating a dataset where the value of the last feature dimension entry is much larger than others.
We generate 1000 datapoints each with 256 dimensions. The first 255 dimensions are generated i.i.d. from $N(0, 1)$, and the last dimension is generated from $N(100, 1)$. As shown in Figure \ref{fig:bits}, the rotated stochastic quantization has the best performance. The improvement is especially significant for low bit rate cases. 

We demonstrate two applications in the rest of this section. The experiments are performed on the MNIST ($d = 1024$) and CIFAR ($d = 512$) datasets.

\textbf{Distributed Lloyd's algorithm.} In the distributed Lloyd's (k-means) algorithm, each client has access to a subset of data points. In each iteration, the server broadcasts the cluster centers to all the clients. Each client updates the centers based on its local data, and sends the centers back to the server. The server then updates the centers by computing the weighted average of the centers sent from all clients. In the quantized setting, the client compresses the new centers before sending to the server. This saves the uplink communication cost, which is often the bottleneck of distributed learning\footnote{In this setting, the downlink is a broadcast, and therefore its cost can be reduced by a factor of $O(n / \log n )$ without quantization, where $n$ is the number of clients.}. 
We set both the number of centers and number of clients to 10. Figure \ref{fig:kmeans} shows the result. 

\textbf{Distributed power iteration.} Power iteration is a widely used method to compute the top eigenvector of a matrix. In the distributed setting, each client has access to a subset of data. In each iteration, the server broadcasts the current estimate of the eigenvector to all clients. Each client then updates the eigenvector based on one power iteration on its local data, and sends the updated eigenvector back to the server. The server updates the eigenvector by computing the average of the eigenvectors sent by all clients. 
Similar to the above distributed Lloyd's algorithm, in the quantized setting, the client compresses the estimated eigenvector before sending to the server. Figure \ref{fig:pca} shows the result.
The dataset is distributed over $100$ clients.

For both of these applications, variable-length coding achieves the lowest quantization error in most of the settings. Furthermore, for low-bit rate, stochastic rotated quantization is competitive with variable-length coding.


\section*{Acknowledgments}
We thank Jayadev Acharya, Keith Bonawitz, Dan
Holtmann-Rice, Jakub Konecny, Tengyu Ma, and Xiang Wu for helpful comments and discussions.

\bibliographystyle{plain}
\bibliography{ref}
\appendix
\section{Proof of Lemma~\ref{lem:maxz}}
\label{app:maxz}
  The equality follows from the symmetry in $HD$. To prove the upper bound, observe that
  \[
  \EE\left[(\maxZi)^2\right] = \Var
  \left(\maxZi \right) +
  \left(\EE\left[\maxZi\right]\right)^2.
  \]
  Let $D(j)$ be the $j^{\text{th}}$ diagonal entry of $D$.
  To bound the first term observe that $\maxZi$
  is a function of $d$ independent random variables $D(1),D(2),\ldots
  D(d)$. Changing $D(j)$ changes the $\maxZi$ by
  at most $\frac{2X_i(j)}{\sqrt{d}}$. Hence, applying Efron-Stein
  variance bound~\cite{EfronS81} yields 
  \[
  \vspace{-0.1cm}
\Var \left(\maxZi \right)\leq \sum^d_{j=1}
\frac{4X^2_i(j)}{2d} = \frac{2\norm{X_i}^2_2}{d}.
  \]
To bound the second term, observe that for every $\beta > 0$,
  \begin{align*}
  \vspace{-0.1cm}
    \beta \maxZi & = \log \exp \left( \beta
    \maxZi \right) \leq \log
    \left(\sum^d_{j=1} e^{\beta Z_i(j)}\right).
  \end{align*}
  Note that $Z_i(k) =\frac{1}{\sqrt{d}} \sum^d_{j=1} D(j) H(k,j)
  X_i(j)$. Since the $D(j)$'s are Radamacher random variables and $|H(k,j)| =1$ for all $k,j$, the
  distributions of $Z_i(k)$ is same for all $k$. Hence by Jensen's
  inequality,
\begin{align*}
  & \EE\left[ \maxZi\right]
   \leq \frac{1}{\beta}
  \EE\left[ \log \left(\sum^d_{j=1} e^{\beta Z_i(j)}\right) \right] \\
  & \leq \frac{1}{\beta} \log \left(\sum^d_{j=1} \EE[ e^{\beta Z_i(j)}]\right) 
  = \frac{1}{\beta} \log \left(d \EE[ e^{\beta Z_i(1)}]\right).
\end{align*}
Since $Z_i(1) =\frac{1}{\sqrt{d}} \sum^d_{j=1} D(j) X_i(j)$,
\begin{align*}
  \EE[ e^{\beta Z_i(1)}] & = \EE \left[ e^{\frac{\beta\sum_j D(j)
        X_i(j)}{\sqrt{d}} } \right]  \stackrel{(a)}{=}
  \prod^d_{j=1} \EE \left[ e^{\frac{\beta D(j) X_i(j)}{\sqrt{d}} }
    \right]\\ & = \prod^d_{j=1} \frac{e^{-\beta X_i(j) /\sqrt{d}}
    +e^{\beta X_i(j) /\sqrt{d}} }{2} \\ & \stackrel{(b)}{\leq}
  \prod^d_{j=1} e^{\beta^2 X^2(j) /2d}  = e^{\beta^2
    \norm{X_i}^2_2/2d},
\end{align*}
where $(a)$ follows from the fact that the $D(i)$'s are independent and
$(b)$ follows from the fact that $e^a + e^{-a} \leq 2 e^{a^2/2}$ for
any $a$.  Hence,
\[
\EE[ \maxZi] \leq \min_{\beta \geq 0} \frac{\log d}{\beta}
+ \frac{\beta \norm{X_i}^2_2}{2d} \leq \frac{2\norm{X_i}_2 \sqrt{\log
    d}}{\sqrt{2d}}. \qedhere
\]
\end{document}